\documentclass[twoside]{article}

\usepackage[preprint]{aistats2027}
\usepackage[round]{natbib}

\usepackage{graphicx}
\usepackage[utf8]{inputenc} 
\usepackage[T1]{fontenc}    
\usepackage{hyperref}       
\usepackage{url}            
\usepackage{booktabs}       
\usepackage{amsfonts}       
\usepackage{nicefrac}       
\usepackage{microtype}      
\usepackage{xcolor}         
\usepackage{comment}
\usepackage{algorithm}
\usepackage[noend]{algpseudocode}
\usepackage{wrapfig}
\usepackage{bbm}
\usepackage{paralist}
\usepackage{mathtools}
\usepackage{xcolor}

\usepackage{amsmath, amsthm, amssymb} 

\usepackage{subcaption}
\theoremstyle{definition}

\newtheorem{corollary}{Corollary}
\newtheorem{theorem}{Theorem}
\newtheorem{assumption}{Assumption}
\newtheorem{proposition}{Proposition}
\newtheorem{lemma}{Lemma}

\usepackage{amssymb}

\begin{document}

%

%

\twocolumn[

\aistatstitle{Open-ended Scientific Discovery with Possibilistic Reasoning}

\aistatsauthor{
    Anita Yang$^{1,2}$ \And Siu Lun Chau$^{3}$ \And Tomoya Wakayama$^{4}$ \And  Krikamol Muandet$^{5}$ \And Masaki Adachi$^{1}$}

\aistatsaddress{
    \begin{tabular}{c}
    $^{1}$Lattice Lab, Toyota Motor Corporation, Japan \\[-1pt]
    $^{2}$Department of Computer Science, University of Tokyo, Japan \\[-1pt]
    $^{3}$EPIC Lab, College of Computing \& Data Science, Nanyang Technological University, Singapore \\[-1pt] 
    $^{4}$RIKEN Center for Advanced Intelligence Project, Japan \\[-1pt]
    $^{5}$Rational Intelligence Lab, CISPA, Helmholtz Center for Information Security, Germany
    \end{tabular}
    }
]

\fancyhead[CE,CO]{\small\bfseries Open-ended Scientific Discovery with Possibilistic Reasoning}

\begin{abstract}
    Autonomous scientific discovery with LLMs requires generating and testing hypotheses adaptively as evidence accumulates while maintaining statistical validity. Existing anytime-valid methods can handle data-dependent hypotheses, but open-ended discovery poses a deeper challenge: the best discovered hypothesis may still be the best of a bad lot, with better explanations yet undiscovered, while even background knowledge such as physical laws may require revision in light of new findings.
    In response, we formalize the problem as \emph{Abductive Autonomous Scientific Discovery (AASD)} using possibility theory. We introduce abductive utility, a computable measure of discovery progress, and \emph{possibility frontier search}, the first algorithm for AASD, which maintains anytime validity and achieves \(\varepsilon\)-optimal abductive utility asymptotically under suitable conditions. Experiments on synthetic and real-world scientific-discovery tasks show strong performance.
\end{abstract}

\section{Introduction}

Large language models (LLMs; \citealt{achiam2023gpt}) have shown remarkable promise in generating novel scientific hypotheses~\citep{lu2024ai, yamada2025ai, gottweis2026accelerating, ghareeb2026multi}. Their success has been particularly striking in mathematics~\citep{alon2026remarks, openai2026navier, openai2026ten}, where the discovery loop---from hypothesis generation to verification---can be carried out entirely computationally. 
Extending this paradigm to the physical sciences is considerably more challenging, as hypotheses are no longer purely deductive but must be tested through experiments that are noisy, finite, and costly. More fundamentally, mathematics is largely \emph{deductive}~\citep{peirce1878deduction}, allowing formal systems such as Lean~\citep{demoura2015lean} to provide essentially binary verification. Physical science, by contrast, is fundamentally \emph{abductive}: given empirical observations and competing hypotheses, it seeks the explanation that \emph{best accounts} for the evidence~\citep{harman1965inference,lipton2017inference}. Unlike deduction, such inference need not be unique: multiple explanations may remain plausible.

\begin{figure}
    \centering
    \includegraphics[
        width=0.9\linewidth,
    ]{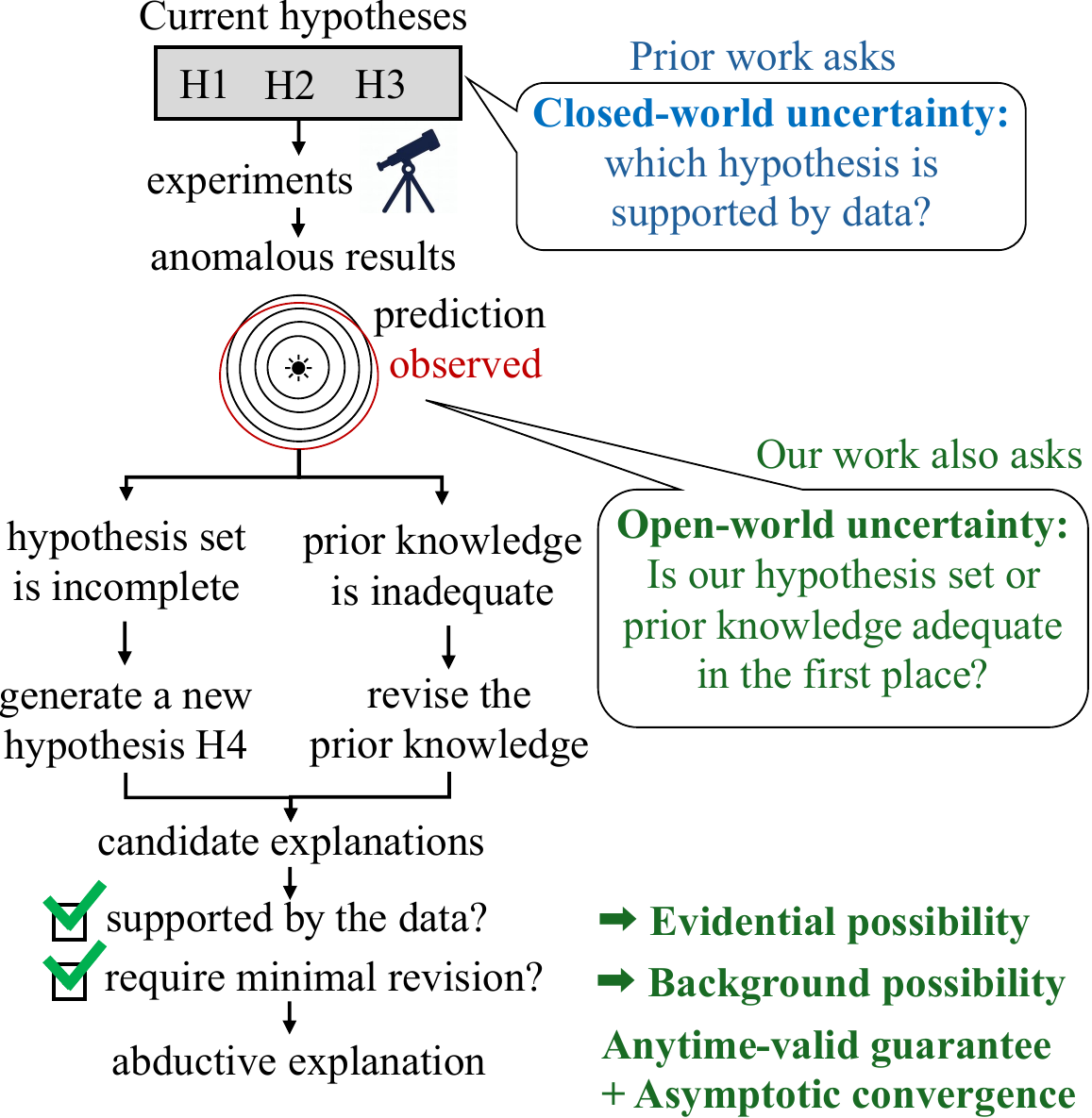}
    \caption{
    Our possibilistic reasoning approach enables open-ended scientific discovery with anytime-valid guarantees under closed-/open-world uncertainty.
    }
    \vspace{-1em}
    \label{fig:concept}
\end{figure}

What, then, makes one plausible hypothesis a \emph{better} explanation than another? Different accounts of abduction emphasise different aspects of scientific reasoning, see \citet{hanson1960logic} and \citet{lipton2017inference}. Here, we specifically draw inspiration from the belief-revision account of abduction developed by \citet{boutilier1995abduction}, particularly its preference for explanations that require less change to existing knowledge. Building on this perspective, we adopt and operationalize three desiderata in a \emph{strict order of priority}. \emph{(1) Empirical adequacy:} a hypothesis must first receive statistical certification from the observed data before it can be selected. \emph{(2) Background knowledge compatibility:} among certified hypotheses, we favour those that can be accommodated with the least change to established scientific knowledge. This preference reflects the empirical scrutiny that such knowledge has survived and is closely related to epistemic entrenchment \citep{dubois1992possibilistic,dubois1991epistemic}. Finally, \emph{(3) Simplicity:} among certified hypotheses tied for the highest background compatibility, we prefer simpler hypotheses. Here, we operationalise simplicity by the number of states a hypothesis permits, commonly motivated by Occam's razor~\citep{rasmussen2000occam,adachi2022fast}.

These criteria guide selection among the hypotheses currently available, but a further difficulty remains: better explanations may not yet have been proposed. This limitation is also known as the \emph{best of a bad lot} in abductive reasoning~\citep{vanfraassen1989laws}. Moreover, the background knowledge guiding this selection may itself be incomplete or inaccurate. Effective scientific discovery must therefore remain open to both new hypotheses and empirically warranted departures from existing knowledge. To this end, we distinguish \emph{closed-world uncertainty}, concerning which hypotheses are supported within a given candidate set, from \emph{open-world uncertainty}, concerning whether that set and the background knowledge used to assess it are themselves adequate (Fig.~\ref{fig:concept}). The challenge is to automate both the assessment of existing hypotheses and the search for alternatives: how can we efficiently generate and test new explanations at scale, and what computable measure can guide the search and quantify progress as new hypotheses and evidence accumulate?

Prior work addresses only parts of this problem. Human-in-the-loop approaches~\citep{lu2024ai, gottweis2025towards} can supply abductive judgment, but do not scale. \citet{huang2025popper,sadhuka2025valuator} use anytime-valid inference~\citep{ramdas2025hypothesis} to safely falsify LLM-generated hypotheses under adaptive experimentation without human intervention. However, falsification alone cannot distinguish among multiple plausible explanations, and these methods assume a fixed hypothesis set. \citet{agarwal2025autodiscovery} instead use Bayesian surprise~\citep{itti2009bayesian} to quantify how strongly observations update an LLM's beliefs. Yet surprise is scientifically meaningful only after plausible known explanations have been excluded.

\textbf{Contributions.}
In response, we first formalize this problem as \emph{Abductive Autonomous Scientific Discovery (AASD)} using \emph{possibility theory}~\citep{dubois1988possibility}. Possibility represents uncertainty by excluding hypotheses inconsistent with evidence. This naturally exposes open-world uncertainty: if all available explanations are excluded, the hypothesis space is incomplete; if viable explanations conflict with background knowledge, that knowledge may require revision. 
We then define \emph{abductive utility}, an automatic measure of monotone discovery progress. It combines data-dependent anytime-valid inference~\citep{martin2026regularized} as \emph{evidential possibility} with guaranteed possibility functions~\citep{dubois2023possibility} as \emph{background possibility}. This provides a computable discovery metric without human intervention. 
Finally, we develop \emph{possibility frontier search}, a principled algorithm for AASD that maintains anytime validity and achieves \(\varepsilon\)-optimal abductive utility asymptotically under suitable conditions. We further derive an observable stopping certificate for \(\varepsilon\)-optimality. We demonstrate strong performance on synthetic and real-world scientific-discovery tasks.

\section{Setup and Background}\label{sec:backgroud}
\vspace{-0.5em}
\subsection{Problem Setting: AASD}
\vspace{-0.5em}
\textbf{Ground-truth.}
In the physical sciences, the ultimate ground truth is nature itself: the underlying environment that generates experimental outcomes. Let \(W_\theta\) denote a model of the environment indexed by \(\theta\in\Theta\), and let \(\mathcal A\) denote the set of available actions a scientist can take (experiments). Performing an action \(A\in\mathcal A\) on an environment \(\theta\) produces a noisy measurement \(M\). We denote its conditional observation law by $M \mid A, \theta \sim \mathbb{P}_\theta(\cdot \mid A)$. Here, \(\mathbb{P}_\theta(\cdot\mid A)\) is the sampling distribution of the measurement implied by the environment model and the observation-noise mechanism.
At each round $t=1,2,\ldots$, we choose an action $A_t$, observe $M_t$, and let $\mathcal D_t=(A_\tau,M_\tau)_{\tau=1}^t$ denote the accumulated experimental record. 
For the theoretical analysis, we assume the model is well specified, so that there exists an unknown \(\theta^\star\in\Theta\) for which the observed measurements are generated according to \(\mathbb{P}_{\theta^\star}(\cdot\mid A_t)\).

\textbf{Abductive target.}
Since the true state $\theta^\star$ is unknown, our goal is to discover a hypothesis $H\subseteq\Theta$ that (i) contains $\theta^\star$, so that it accounts for the underlying phenomenon, and (ii) is compatible with the background knowledge (e.g., physical laws). 
Let $u(H)\in[0,1]$ denote the background-compatibility utility of $H$, defined formally in \S\ref{sec:evidence}, with larger values indicating
less required revision. We define the oracle and operational \emph{abductive utilities}:
\begin{equation}
\begin{aligned}
U^{\star}
&:=
\sup_{H \in \mathcal H_\infty: \; \theta^\star \in H,\; H \neq \Theta} u(H),\\
U_t
&:=
\sup_{H \in \mathsf C_t} u(H), \quad \sup\varnothing:=0.
\label{eq:abductive-possibility}
\end{aligned}
\end{equation}
where $\mathcal H_\infty$ is the generator-reachable, possibly uncountable, hypothesis class and $\mathsf C_t$ is the set of hypotheses certified by round $t$, defined in \S\ref{sec:evidential}.
Thus, $U^\star$ is the utility of the best reachable true explanation, whereas $U_t$ is the utility of the best currently certified one. 

\textbf{Uncertain variables.}
Experimental outcomes \(M_t\) are random variables modeled by \(\mathbb P_\theta(\cdot\mid A)\), whereas \(\theta^\star\) is fixed but unknown ($\neq$ random)---which is referred to as an \emph{uncertain variable}~\citep{houssineau2018parameter,hieu2025decoupling}---and we represent uncertainty about \(\theta^\star\) using a possibility function \(\pi\), as detailed in \S\ref{sec:evidence}.

\textbf{Open-ended hypothesis generation.}
At every finite round $t$, only finitely many hypotheses have been generated and evaluated. Let $\mathcal H_0$ denote the initial candidates and the set $\mathcal H_{\le t}$ denote those initial or generated proper hypotheses at round $t$.

\begin{figure*}[h]
    \centering
    \includegraphics[width=0.7\linewidth]{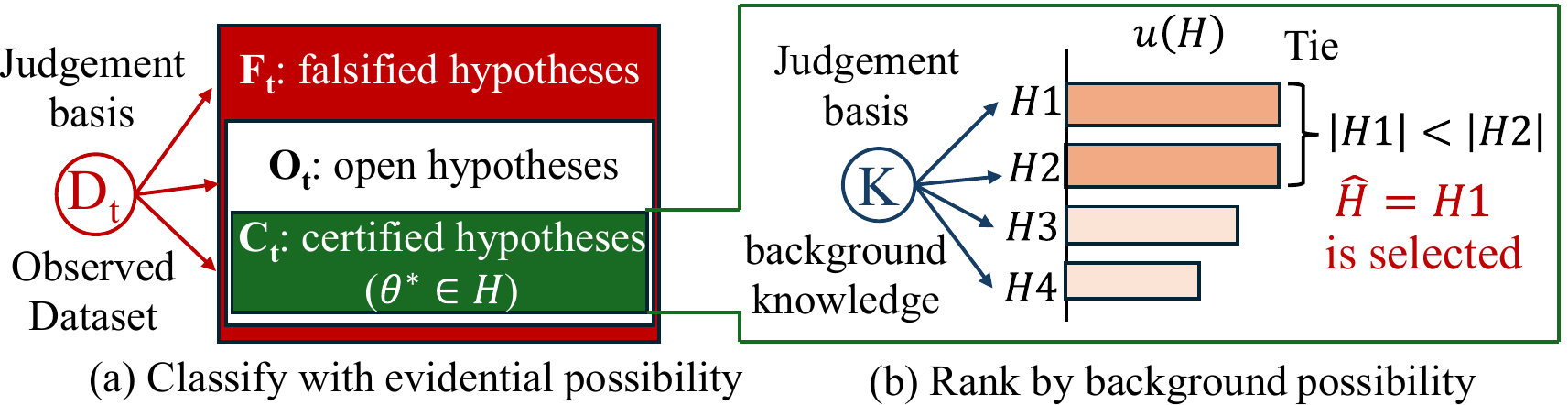}
    \caption{Two-stage assessment. (a) Persistent evidential possibility classifies claims as contradicted, open, or certified. (b) Guaranteed background possibility ranks certified claims by their minimum statewise compatibility. Smaller semantic claims are preferred when scores tie.} 
    \label{fig:possibilities}
    \vspace{-1em}
\end{figure*}
\textbf{Falsifiability.}
We assume that the set of available experiments $\mathcal{A}$ is sufficiently expressive to distinguish between any two distinct parameter values in $\Theta$:
\begin{assumption}[Falsifiability]
\label{ass:falsifiability}
For every $\theta,\vartheta\in\Theta$ such that $\theta\ne\vartheta$, there is $A\in\mathcal A$ with $d_A(\theta,\vartheta)>0$ where $d_A(\theta,\vartheta)$ is a discrepancy between the observation laws \(\mathbb P_\theta(\cdot \mid A)\) and \(\mathbb P_\vartheta(\cdot \mid A)\).
\end{assumption}
For identifiability, Assumption~\ref{ass:falsifiability} only requires this separation to be positive for at least one action whenever \(\theta\neq\vartheta\). A strictly proper scoring rule \citep{gneiting2011making} induces a nonnegative discrepancy of this form. Our finite-sample rate analysis specializes to the logarithmic score, for which \(d_A\) is the directed KL divergence. We treat \(d_A\) as known from the model family.
Falsifiability is a shared assumption in the scientific discovery community~\citep{huang2025popper, agarwal2025autodiscovery}.

\subsection{Background}
\textbf{Possibility theory.}
Possibility theory~\citep{zadeh1978fuzzy,dubois1988applications,dubois1988possibility} provides a representation of uncertainty distinct from probabilistic randomness.
Our information about \(\theta^\star\) is represented by a possibility function \(\pi:\Theta\rightarrow[0,1]\)
where larger \(\pi(\theta)\) indicates that the state \(\theta\) is less strongly excluded by the available information. 
A \(p\)-value is an example \(\pi(\theta)=p(\theta)\): smaller values more strongly exclude \(\theta\), while larger values leave it plausible.
For a hypothesis \(H\subseteq\Theta\), the induced possibility measure is \(\Pi(H)=\sup_{\theta\in H}\pi(\theta)\) and its dual necessity measure is \(N(H)=1-\Pi(H^c)\), where $H^c = \Theta \setminus H$ is the complement.
\(\Pi(H)\) quantifies the extent to which \(H\) remains non-excluded, whereas \(N(H)\) quantifies the extent to which its complement has been excluded. 
A possibility value should therefore not be interpreted as a probability that $H$ is true. In particular, \(\Pi(H)=1\) means only that \(H\) has not been excluded. Under complete ignorance, both \(\Pi(H)=\Pi(H^c)=1\), expressing that neither alternative can be ruled out; by contrast, a probability measure satisfies \(\mathbb P(H)+\mathbb P(H^c)=1\). Consequently, \(\Pi(H)=1\) need not imply \(N(H)=1\): the evidence may simply be insufficient to rule out either $H$ or $H^c$.

\textbf{Anytime-valid inference.}
Anytime-valid inference~\citep{ramdas2023game,ramdas2025hypothesis} provides guarantees robust to optional stopping and adaptive data collection. An \(e\)-value is nonnegative and satisfies \(\mathbb E[E]\le1\) under the null hypothesis $H_\text{null}$, so \(\Pr(E\ge1/\delta)\le\delta\). An \(e\)-process extends this property sequentially: $\mathbb{P}_{H_\text{null}}\left(\sup_t E_t\geq 1/\delta\right)\leq \delta$.
Hence, we may stop when $E_t$ crosses $1/\delta$ without inflating the false-positive rate.
Following~\citet{martin2026regularized}, we define possibility function from $e$-process as \emph{\(e\)-possibility}
\begin{equation}
\label{eq:epos_theta}
\pi_t(\theta):=\min_{0\le s\le t}\{1,E_s(\theta)^{-1}\},
\end{equation}
Then $\pi_t(\theta)\le\delta$ iif
$\max_{0\le s\le t}E_s(\theta)\ge1/\delta$, so the anytime-valid rejection guarantee transfers to $\pi_t$.

\section{Possibilistic Reasoning}\label{sec:evidence}
We formulate AASD using two components of abductive utility: \emph{Evidential possibility} determines the certified set $\mathsf C_t$, using necessity to exclude surviving counterexamples. \emph{Background possibility} supplies the score $u(H)$ used to rank certified claims. 

\subsection{Evidential Possibility}\label{sec:evidential}
As in Fig.~\ref{fig:possibilities}(a), the accumulated evidence classifies hypotheses into three types based on observation $\mathcal D_t$.

\textbf{Version space.}
Fix $\delta\in(0,1)$ and define
\begin{equation}
V_t:=\{\theta\in\Theta:\pi_t(\theta)>\delta\}.
\label{eq:persistent-contour}
\end{equation}
The \emph{version space}~\citep{mitchell1979version} contains the states not yet excluded by the experimental record. Since $\pi_{t+1}(\theta)\le\pi_t(\theta)$, it satisfies $V_{t+1}\subseteq V_t$. We classify candidates as follows (Fig.~\ref{fig:possibilities}(a)):
\begin{align}
\mathsf F_t
&=
\left\{H\in\mathcal H_{\le t}:H\cap V_t=\varnothing\right\}
\label{eq:three-sets}
\\
\mathsf O_t
&=
\left\{
H\in\mathcal H_{\le t}:
H\cap V_t\neq\varnothing,\;
V_t\setminus H\neq\varnothing
\right\}
\notag
\\
\mathsf C_t
&=
\left\{
H\in\mathcal H_{\le t}:
H\cap V_t\neq\varnothing,\;
V_t\setminus H=\varnothing,\;
H\neq\Theta
\right\}
\notag
\end{align}

\textbf{Connection to $e$-possibility.}
Define
\begin{equation*}
\label{eq:persistent-hypothesis-possibility}
\Pi_t(H):=\sup_{\theta\in H}\pi_t(\theta),\qquad
N_t(H):=1-\Pi_t(H^c).
\end{equation*}
We use $\sup\varnothing:=0$. Then $H\cap V_t\ne\varnothing$ iff $\Pi_t(H)>\delta$, while $V_t\subseteq H$ iff $N_t(H)\ge1-\delta$.

\begin{proposition}[\textbf{Anytime-valid version space}]
\label{prop:anytime-valid-version-space}
Let $\mathcal H_{\le t}$ be a set of hypotheses generated by round $t$,
from the same accumulated record $\mathcal D_t$ used to construct $V_t$. For the event $\mathcal E_\delta
:=
\left\{\theta^\star\in V_t, \,\, \forall t\ge0\right\}$, we have
\begin{equation*}
\label{eq:version-space-coverage}
\mathbb P_{\theta^\star}(\mathcal E_\delta)
\ge1-\delta.
\end{equation*}
Moreover, on $\mathcal E_\delta$, simultaneously $\forall t\ge0$ and $\forall H\in\mathcal H_{\le t}$,
\begin{align*}
\theta^\star\in H&\ \Longrightarrow\ H\cap V_t\ne\varnothing,
&\text{(Not contradicted)}\\
V_t\subseteq H&\ \Longrightarrow\ \theta^\star\in H.
&\text{(Certified)}
\end{align*}
In particular,
\[
\mathbb P_{\theta^\star}\!\left(
\exists t,\exists H\in\mathsf C_t:\theta^\star\notin H
\right)\leq \delta.
\]
\end{proposition}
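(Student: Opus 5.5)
The argument reduces everything to the anytime-valid guarantee of the $e$-process evaluated at the true state. The key observation is that for each fixed $\theta$, $E_s(\theta)$ is an $e$-process for the point null $\{\theta\}$. We apply this with $\theta=\theta^\star$, under which the data are actually generated by $\mathbb P_{\theta^\star}(\cdot\mid A_t)$, with the actions $A_t$ chosen adaptively and predictably from $\mathcal D_{t-1}$.

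\textbf{Coverage.} By the defining property of an $e$-process (Ville's inequality, valid under optional stopping and predictable adaptive action choice),
\[
\mathbb P_{\theta^\star}\Bigl(\sup_{t\ge0}E_t(\theta^\star)\ge 1/\delta\Bigr)\le\delta .
\]
By the equivalence stated after \eqref{eq:epos_theta}, $\pi_t(\theta^\star)\le\delta$ holds iff $\max_{s\le t}E_s(\theta^\star)\ge1/\delta$. Hence the complement of $\mathcal E_\delta$, namely $\{\exists t:\pi_t(\theta^\star)\le\delta\}$, equals $\{\exists t:\max_{s\le t}E_s(\theta^\star)\ge1/\delta\}$, which is exactly the event $\{\sup_t E_t(\theta^\star)\ge1/\delta\}$. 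This gives $\mathbb P_{\theta^\star}(\mathcal E_\delta)\ge1-\delta$. Two points deserve care here, and I expect them to be the only real obstacle:
\begin{itemize}
\item The $e$-process property must hold for the adaptively chosen actions. This is part of the standing construction from \citet{martin2026regularized} and \citet{ramdas2025hypothesis}, since the actions are predictable with respect to the filtration generated by $\mathcal D_t$.
\item No union bound over $\theta$ is needed, because only the single fixed (though unknown) state $\theta^\star$ is involved.
\end{itemize}

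\textbf{Implications.} On $\mathcal E_\delta$ we have $\theta^\star\in V_t$ for every $t$. Both implications then follow deterministically, for any $H$ whatsoever.
\begin{itemize}
\item If $\theta^\star\in H$, then $\theta^\star\in H\cap V_t$, so $H\cap V_t\neq\varnothing$.
\item If $V_t\subseteq H$, then $\theta^\star\in V_t\subseteq H$.
\end{itemize}
The fact that $\mathcal H_{\le t}$ is generated from the same record $\mathcal D_t$ is irrelevant to these implications. The event $\mathcal E_\delta$ does not depend on $H$, so the conclusions hold simultaneously for all $t$ and all data-dependent hypotheses. This point should be stressed explicitly: there is no multiplicity correction because the hypotheses enter only through set inclusions.

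\textbf{Final bound.} By the definition in \eqref{eq:three-sets}, every $H\in\mathsf C_t$ satisfies $V_t\setminus H=\varnothing$, i.e.\ $V_t\subseteq H$. So on $\mathcal E_\delta$, every certified $H$ contains $\theta^\star$. The event $\{\exists t,\exists H\in\mathsf C_t:\theta^\star\notin H\}$ is therefore contained in $\mathcal E_\delta^c$, whose probability is at most $\delta$.
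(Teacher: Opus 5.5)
Your proof is correct and follows the paper's argument. Every error reduces to the single statewise event $\{\exists t:\pi_t(\theta^\star)\le\delta\}=\{\exists t:E_t(\theta^\star)\ge 1/\delta\}$, which is bounded by the anytime-valid $e$-process guarantee at $\theta^\star$ under the actual adaptive protocol; both implications and the certification bound then follow deterministically from $\theta^\star\in V_t$, with no union bound over the data-dependent claims. One small precision: since the supremum need not be attained, $\{\exists t:E_t(\theta^\star)\ge 1/\delta\}$ is in general only contained in $\{\sup_t E_t(\theta^\star)\ge 1/\delta\}$ rather than equal to it, but that inclusion is all you need.
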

The proof is in Appendix~\ref{apd:simultaneous-hypothesis}. Every false certification implies $\pi_t(\theta^\star)\le\delta$, so all such errors are covered by the same statewise failure event. Thus, this holds \emph{uniformly over time and generated hypotheses}. On $\mathcal E_\delta$, true candidates are never pruned and certified candidates stay certified. Thus $\mathsf C_t\subseteq\mathsf C_{t+1}$ and $U_{t+1}\ge U_t$.

\subsection{Background Possibility}
Fix a background compatibility contour $\pi^K:\Theta\to[0,1]$ throughout discovery. Unlike evidential possibility, the background possibility is based on the fixed background knowledge $K$, which may be elicited from LLMs~\citep{yang2026verbalizing} or historical data (see \S\ref{sec:experiment}).
For a nonempty hypothesis, define background-compatibility utility using \emph{guaranteed possibility}~\citep{dubois2023possibility, dubois2001not}
\begin{align}
u(H):=\inf_{\theta\in H}\pi^K(\theta).
\label{eq:guaranteed-background}
\end{align}
This established set function differs from ordinary possibility: it measures the \emph{minimum} compatibility among the represented states permitted by $H$. The infimum is over every state satisfying the membership predicate of $H$, including states already excluded by experiments\footnote{The infimum is not over $H\cap V_t$: this intersection equals $V_t$ for every $H \in \mathsf C_t$, which assigns them all the same score.}. The numerical contour and the claim's semantics remain fixed, so $u(H)$ does not change over time.

\begin{proposition}[Uniform background accommodation]
\label{thm:minimal-epistemic-change}
Let $H\ne\varnothing$ be measurable. Among arbitrary normalized
measurable revised contours, define
\begin{equation*}
R_K(H):=\inf_{\pi'}\left\{ \|\pi'-\pi^K\|_\infty: \pi'=1\text{ on }H \right\},
\label{eq:min_eps_rev}
\end{equation*}
where the infimum is over normalized measurable contours $\pi'$.
Then $R_K(H)=1-u(H)$. An optimizer is one on $H$ and
unchanged from $\pi^K$ on $H^c$.
\end{proposition}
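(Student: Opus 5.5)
The plan is to prove the two inequalities $R_K(H)\ge 1-u(H)$ and $R_K(H)\le 1-u(H)$ separately. The upper bound comes from the explicit optimizer named in the statement, which also settles the second claim.

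\textbf{Lower bound.} Let $\pi'$ be any normalized measurable contour with $\pi'=1$ on $H$. For every $\theta\in H$,
\[
\|\pi'-\pi^K\|_\infty\ \ge\ |\pi'(\theta)-\pi^K(\theta)| = 1-\pi^K(\theta),
\]
since $\pi^K(\theta)\le 1$. Taking the supremum over $\theta\in H$, which is nonempty by assumption, gives
\[
\|\pi'-\pi^K\|_\infty\ \ge\ \sup_{\theta\in H}\bigl(1-\pi^K(\theta)\bigr)=1-\inf_{\theta\in H}\pi^K(\theta)=1-u(H),
\]
using the definition of $u$ in \eqref{eq:guaranteed-background}. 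Taking the infimum over all admissible $\pi'$ yields $R_K(H)\ge 1-u(H)$.

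\textbf{Upper bound and optimizer.} Define
\[
\pi^\dagger(\theta):=\begin{cases}1,&\theta\in H,\\ \pi^K(\theta),&\theta\in H^c.\end{cases}
\]
We check that $\pi^\dagger$ is admissible.
\begin{itemize}
\item It is measurable, because $H$ is measurable and $\pi^K$ is measurable.
\item It takes values in $[0,1]$.
\item It is normalized, since $\sup_\theta\pi^\dagger(\theta)=1$ is attained at any point of the nonempty set $H$. This holds whether or not $\pi^K$ itself attains $1$.
\item It equals $1$ on $H$ by construction.
\end{itemize}
Its deviation from $\pi^K$ is zero on $H^c$ and equals $1-\pi^K(\theta)$ on $H$. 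Hence
\[
\|\pi^\dagger-\pi^K\|_\infty=\sup_{\theta\in H}\bigl(1-\pi^K(\theta)\bigr)=1-u(H).
\]
This gives $R_K(H)\le 1-u(H)$. Combined with the lower bound, we get $R_K(H)=1-u(H)$, with the infimum attained by $\pi^\dagger$.

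\textbf{Subtle points.} The argument is elementary, so the only care needed is in the conventions.
\begin{itemize}
\item $H$ must be nonempty, so that $u(H)$ is a genuine infimum rather than the empty-set convention, and so that $\pi^\dagger$ is normalized.
\item "Normalized" is read as $\sup\pi'=1$. If it were read as "attains $1$", the same construction still works because $\pi^\dagger$ attains $1$ on $H$.
\item The sup-norm is taken pointwise over all of $\Theta$, not in an essential-sup sense. If an essential-sup reading were intended, the lower bound would need $\pi^K$ to be close to its infimum on a non-null subset of $H$. I would state explicitly that the pointwise sup-norm is used, which is consistent with $u$ being a pointwise infimum.
\end{itemize}
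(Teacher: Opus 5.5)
Your proof is correct and matches the paper's argument: the pointwise lower bound $\|\pi'-\pi^K\|_\infty\ge\sup_{\theta\in H}(1-\pi^K(\theta))=1-u(H)$ for every admissible $\pi'$, attained by setting the contour to one on $H$ and leaving $\pi^K$ unchanged on $H^c$. Your checks of measurability and normalization (using $H\neq\varnothing$), and your pointwise reading of the sup-norm, make explicit what the paper leaves implicit.
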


\begin{proof}
Every admissible revised contour satisfies
\[
\|\pi'-\pi^K\|_\infty \ge \sup_{\theta\in H}(1-\pi^K(\theta)) = 1-u(H).
\]
Raising the contour to one on $H$ and leaving it unchanged
on $H^c$ attains this lower bound.
\end{proof}

This is a candidate-wise counterfactual cost of uniformly accommodating the states allowed by $H$, not a global update of $K$. 
Thus, maximizing \(u(H)\) minimizes this revision cost.

\subsection{Lexicographic preferences and simplicity}
When $\mathsf C_t\neq\varnothing$, we select
\begin{equation}
\widehat H_t\in
\operatorname*{arg\,min}_{H\in\mathsf C_t,\;u(H)=U_t}|H|,
\label{eq:specificity-tiebreak}
\end{equation}
where $|H|$ is the number of states represented by $H$. 
Because \(u(H)\) is antitone under set inclusion, \(H_1\subseteq H_2\) implies \(u(H_1)\ge u(H_2)\); cardinality therefore breaks ties by favoring more specific hypotheses with equal utility.
Eq.~\eqref{eq:specificity-tiebreak} thus induces the lexicographic preference:
$\text{evidential validity}
>_{\mathrm{lex}}
\text{background compatibility}
>_{\mathrm{lex}}
\text{simplicity}$.
If $\mathsf C_t=\varnothing$, we set $\widehat H_t=\bot$. Thus evidence takes precedence, followed by background compatibility and, finally, simplicity as semantic specificity within the chosen finite representation.

\section{Open-ended discovery}
\label{sec:possibility-cut-algorithm}
\begin{algorithm}[t]
\caption{Possibility Frontier Search}
\label{alg:possibility-frontier-search}
\footnotesize
\begin{algorithmic}[1]
\State $\mathcal D_0\gets\varnothing$, $E_0\equiv1$, $\pi_0\equiv1$, $\mathcal H_{\le-1}\gets\mathcal H_0$
\For{\(t=0,\ldots,T\)}
    \State $V_t\gets\{\theta:\pi_t(\theta)>\delta\}$
    \State $\mathcal H_{\le t}\gets$
    \Call{Expand}{$K,\mathcal D_t,V_t,\mathcal H_{\le t-1},B$}
    \label{alg_line:expand}
    \State $(\mathsf C_t,\mathsf O_t, U_t,\overline U_t,\widehat H_t,\mathsf O_t^+) \gets$ Eqs.~(\ref{eq:three-sets},\ref{eq:abductive-possibility},\ref{eq:observable-abductive-gap},\ref{eq:specificity-tiebreak},\ref{eq:active-open-set})
    \If{$\mathsf C_t\ne\varnothing$ and
    $\overline U_t-U_t\le\varepsilon$ and $\mathsf O_t^+=\varnothing$}
        \State \Return $(\widehat H_t,U_t,
        \text{certified }\varepsilon\text{-optimal})$
    \EndIf
    \If{$t=T$}
        \State \Return $(\widehat H_t,U_t,\text{budget exhausted})$
    \EndIf
    \State $\rho_{t+1}\gets$ Eq.~\eqref{eq:cut-directed-design};
       $A_{t+1}\sim\rho_{t+1}$; observe $M_{t+1}$
    \label{alg_line:experiment}
    \State $\mathcal D_{t+1}\gets
           \mathcal D_t\cup\{(A_{t+1},M_{t+1})\}$;
           update $E_{t+1}$
    \State $\pi_{t+1}\gets
           \min\{\pi_t,1,E_{t+1}^{-1}\}$ pointwise
\EndFor
\end{algorithmic}
\end{algorithm}

\begin{algorithm}[t]
\caption{\textsc{Expand}: Hypothesis Expansion}
\label{alg:expand}
\footnotesize
\begin{algorithmic}[1]
\State $\mathcal H_{\le t}\gets
\{H\in\mathcal H_{\le t-1}:H\ne\Theta,\ H\cap V_t\ne\varnothing\}$
\For{$j=1,\ldots,B$}
    \State $\widetilde H_{t,j}\sim\mathsf{Gen}
    (\cdot\mid K,\mathcal D_t,V_t,\mathcal H_{\le t-1})$
    \If{$\widetilde H_{t,j}\ne\Theta$ and
        $\widetilde H_{t,j}\cap V_t\ne\varnothing$}
        \State $\mathcal H_{\le t}\gets
            \mathcal H_{\le t}\cup\{\widetilde H_{t,j}\}$
        \If{$V_t\subseteq\widetilde H_{t,j}$}
        \Comment{Certified}
            \State \textbf{break}
        \EndIf
    \EndIf
\EndFor
\State \Return \(\mathcal H_{\leq t}\)
\end{algorithmic}
\end{algorithm}

We introduce \emph{Possibility Frontier Search} (Alg.~\ref{alg:possibility-frontier-search}), alternating between hypothesis expansion and experiment selection to resolve promising open hypotheses.

\subsection{Hypothesis Expansion}\label{sec:hypothesis_expansion}

Used by Alg.~\ref{alg:possibility-frontier-search}, \textsc{Expand} in  Alg.~\ref{alg:expand} draws at most \(B\) hypotheses and stops early at the first proper certified claim. A proposal that compiles into $H\subseteq\Theta$ is retained when $H\ne\Theta$ and $H\cap V_t\ne\varnothing$, and is certified when $\varnothing\ne V_t\subseteq H$ (recall \S\ref{sec:evidential}).
Alg.~\ref{alg:expand} uses rejection sampling for concreteness; implementation details and the induced conditional proposal law appear in
Appendix~\ref{app:generator-coverage}.

\subsection{Possibility Frontier Search}\label{sec:pos_front}
\begin{figure}
    \centering
    \includegraphics[width=0.95\linewidth]{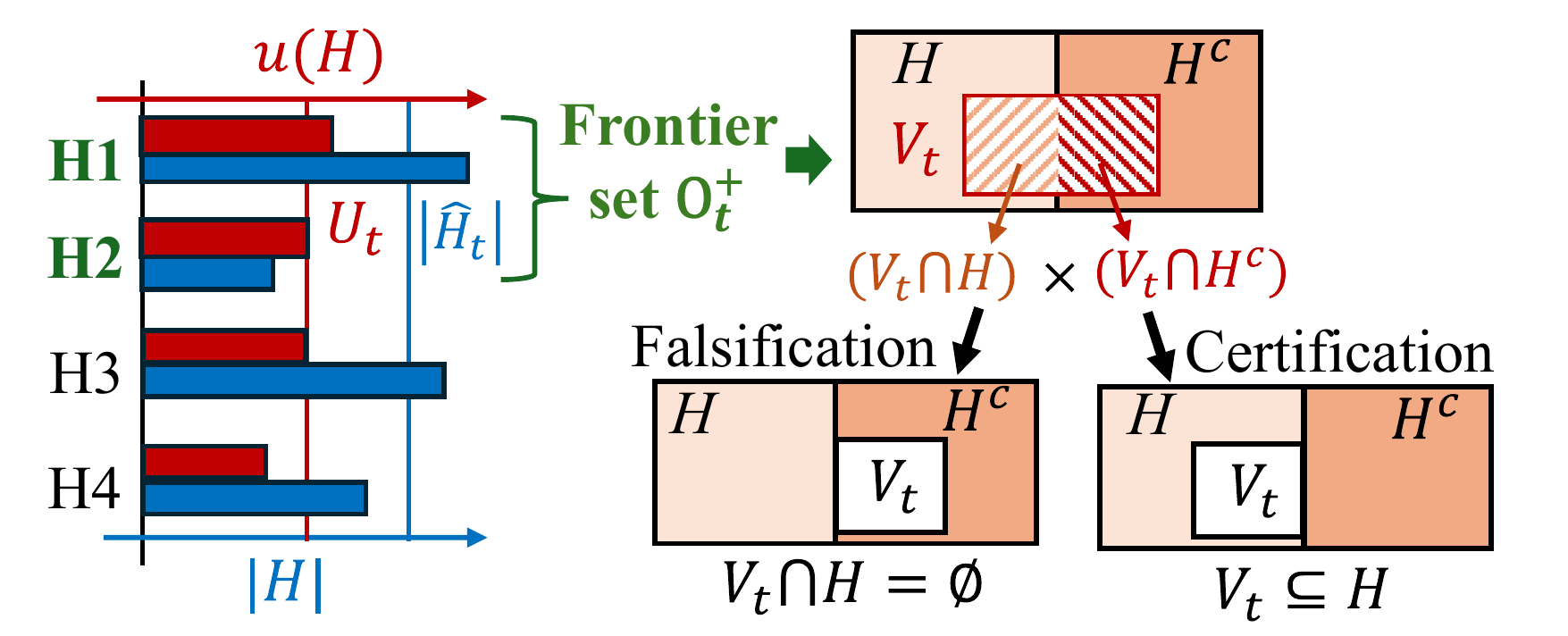}
    \caption{Possibility frontier search collects open hypotheses that can improve the current best \(U_t\) or $|\widehat H_t|$, then selects experiments to falsify or certify them.}
    \label{fig:pfs}
    \vspace{-1em}
\end{figure}
Fig.~\ref{fig:pfs} illustrates the intuition: 
an open candidate can improve the current best lexicographic preferences. If $\mathsf C_t=\varnothing$, set $\mathsf O_t^+=\mathsf O_t$. Otherwise, define
\begin{align}
\mathsf O_t^+=\left\{H\in\mathsf O_t:
\begin{array}{l}
u(H)>U_t,\ \text{or}\\
u(H)=U_t,\ |H|<|\widehat H_t|
\end{array}\right\}.
\label{eq:active-open-set}
\end{align}
The frontier pair set is
\begin{equation*}
\mathcal P_t=
\begin{cases}
\displaystyle\bigcup_{H\in\mathsf O_t^+}
(V_t \cap H) \times (V_t \cap H^c), &\mathsf O_t^+\ne\varnothing,\\[1mm]
\{(\theta,\vartheta)\in V_t^2:\theta\ne\vartheta\},&\text{otherwise}.
\end{cases}
\label{eq:active-possibility-cut}
\end{equation*}
For $\mathcal P_t$, choose
\begin{equation}
\rho_{t+1}\in\operatorname*{arg\,max}_{\rho\in\Delta(\mathcal A)}
\min_{(\theta,\vartheta)\in\mathcal P_t}
\sum_{A\in\mathcal A}\rho(A)d_A(\theta,\vartheta).
\label{eq:cut-directed-design}
\end{equation}
If $\mathcal P_t=\varnothing$, use a fixed reference allocation. The inner minimization identifies the least-separated active pair, while the outer maximization selects the experiment distribution that maximizes its expected discrepancy. For finite pair and action sets, this can be implemented by a linear program; see Appendix~\ref{app:frontier-lp}. 

\textbf{Stopping with $\varepsilon$-optimality.}
Define
\begin{equation}
\overline U_t:=\sup_{\theta\in V_t}\pi^K(\theta).
\label{eq:observable-abductive-gap}
\end{equation}
On $\mathcal E_\delta$, whenever $\mathsf C_t\ne\varnothing$,
$U_t\le U^\star\le\pi^K(\theta^\star)\le\overline U_t$.
For a fixed tolerance $\varepsilon>0$, we can stop at
\begin{equation}
\tau_\varepsilon:=\inf\left\{t\ge0:
\begin{array}{l}
\mathsf C_t\ne\varnothing,\ \mathsf O_t^+=\varnothing,\\
\overline U_t-U_t\le\varepsilon
\end{array}\right\},
\label{eq:abductive-stopping-time}
\end{equation}
with $\inf\varnothing:=\infty$. The safeguard $\mathsf O_t^+=\varnothing$ prevents stopping while a known open candidate $\mathsf O_t$ could improve the lexicographic preference. 
When the stopping condition is reached, the probability of returning a false or \(\varepsilon\)-suboptimal claim is at most \(\delta\); finite termination is not guaranteed.
(Theorem~\ref{thm:observable-abductive-stopping} in Appendix~\ref{app:observable-abductive-stopping}).

\subsection{Theoretical Analysis}
\label{sec:generator-relative-recovery}

\textbf{Reachable goal.}
For $\varepsilon>0$, define
\begin{equation}
\mathcal H_\varepsilon^\star:=
\{H\in\mathcal H_\infty:\theta^\star\in H,
\ u(H)\ge U^\star-\varepsilon\}.
\label{eq:epsilon-optimal-generator-set}
\end{equation}
We call $\varepsilon$-recovery as $U_t\ge U^\star-\varepsilon$. We additionally establish the presence of a certified target, $\mathsf C_t\cap\mathcal H_\varepsilon^\star\ne\varnothing$, which does not count an empty archive as a discovery when the target score is zero. Recovery is analyzed for the generator--policy continuation with termination and the external observation cap disabled $T \to \infty$.

\textbf{Generator coverage.}
Once a near-optimal claim is certifiable, we assume that each budgeted expansion has conditional probability at least $q_\varepsilon>0$ of producing a certified near-optimal claim, unless one is already certified. Appendix~\ref{app:generator-coverage} states this (Assumption~\ref{ass:generator-coverage}). This is a standard reachability condition in stochastic search~\citep{solis1981minimization, adachi2026open}.

\textbf{Rate model.}
For recovery rates, we use the finite-state likelihood-ratio mixture (Eq.~\eqref{app:eq:likelihood-ratio-eprocess})
and conditional sub-Gaussian model (Eq.~\eqref{app:eq:centered-one-step-llr}) in
Appendix~\ref{app:likelihood-ratio-rate}, which is standard modeling choice for $e$-processes~\citep{vovk2021values, grunwald2024safe, howard2020time}.

\begin{theorem}[\textbf{$\epsilon$-recovery guarantee}]
\label{thm:generic-approximate-recovery}
Assume $\Theta$ and $\mathcal A$ are finite and use the full-support likelihood-ratio mixture in Eq.~\eqref{app:eq:likelihood-ratio-eprocess}.
Fix $\varepsilon>0$ and assume a conditional sub-Gaussian proxy $\sigma_\varepsilon^2>0$ as in Eq.~\eqref{app:eq:centered-one-step-llr}. Suppose Assumptions~\ref{ass:falsifiability} and~\ref{ass:generator-coverage} hold, and some $H_\varepsilon\in\mathcal H_\varepsilon^\star$ satisfies Eq.~\eqref{app:eq:frontier-information-rate} with $\Gamma_\varepsilon>0$.
Set $m_\varepsilon:=|\Theta\setminus H_\varepsilon|,\; b_\delta:=\log\frac{1}{\delta w_{\min}}$, where $w_{\min}>0$ is the smallest mixture weight.
For every $t\ge0$,
\begin{equation}
\mathbb P_{\theta^\star}
\bigl(\mathsf C_t\cap\mathcal H_\varepsilon^\star=\varnothing\bigr)
\le\delta+\mathcal B_t(\varepsilon),
\label{eq:optimized-conditional-recovery-bound}
\end{equation}
where for $\tau\ge1$,
\begin{align*}
\mathcal B_t(\varepsilon)
&:=\inf_{0\leq\tau\leq t}
\left\{
\underbrace{R_\tau(\varepsilon)}_\text{not yet certifiable}+\underbrace{(1-q_\varepsilon)^{t-\tau+1}}_\text{generation error}
\right\},\\
R_\tau(\varepsilon)
&:=1\wedge m_\varepsilon
\exp\!\left\{-\frac{(\tau\Gamma_\varepsilon-b_\delta)_+^2}
{2\tau\sigma_\varepsilon^2}\right\}, \; R_0(\varepsilon):=1. 
\end{align*}
The same bound holds for
$\mathbb P_{\theta^\star}(U_t<U^\star-\varepsilon)$.
\end{theorem}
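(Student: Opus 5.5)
The bound decomposes the failure event $\{\mathsf C_t\cap\mathcal H_\varepsilon^\star=\varnothing\}$ along three sources: coverage failure of the version space, slow elimination of the states outside $H_\varepsilon$, and failure of the generator. Fix $\tau\in\{0,\dots,t\}$ and define the \emph{certifiability event}
\[
G_\tau:=\{V_\tau\subseteq H_\varepsilon\}.
\]
On $\mathcal E_\delta$, $V_s$ is nonincreasing and contains $\theta^\star\in H_\varepsilon$. Hence, once $G_\tau$ holds, every $s\ge\tau$ has $\varnothing\neq V_s\subseteq H_\varepsilon$, so $H_\varepsilon$ (or any near-optimal claim) is certifiable at every such round. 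The plan is to prove
\[
\mathbb P(\mathsf C_t\cap\mathcal H_\varepsilon^\star=\varnothing)\le \mathbb P(\mathcal E_\delta^c)+\mathbb P(G_\tau^c)+\mathbb P(\text{no certified target at rounds }\tau,\dots,t \mid G_\tau\cap\mathcal E_\delta),
\]
bound the three terms by $\delta$, $R_\tau(\varepsilon)$ and $(1-q_\varepsilon)^{t-\tau+1}$, and take the infimum over $\tau$. For $\tau=0$ we use $R_0=1$ trivially. The first term is Proposition~\ref{prop:anytime-valid-version-space}.

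\textbf{Generation term.} On $\mathcal E_\delta$ a certified claim stays certified (since $\mathsf C_t\subseteq\mathsf C_{t+1}$, as noted after Proposition~\ref{prop:anytime-valid-version-space}), and \textsc{Expand} never discards a hypothesis meeting $V_s$. Therefore, if some round $s\in[\tau,t]$ produces a certified element of $\mathcal H_\varepsilon^\star$, it remains in $\mathsf C_t$. By Assumption~\ref{ass:generator-coverage}, conditional on the past and on the target being certifiable but not yet certified, each of the $t-\tau+1$ expansion calls succeeds with probability at least $q_\varepsilon$. Chaining the conditional probabilities via the tower property gives the factor $(1-q_\varepsilon)^{t-\tau+1}$. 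The measurability care needed is to note that $G_\tau$ is $\mathcal D_\tau$-measurable, so the conditioning is legitimate.

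\textbf{Elimination term (main obstacle).} We have $G_\tau^c\subseteq\bigcup_{\vartheta\in\Theta\setminus H_\varepsilon}\{\pi_\tau(\vartheta)>\delta\}$. A union bound over these $m_\varepsilon$ states gives the prefactor $m_\varepsilon$. For each fixed $\vartheta\notin H_\varepsilon$, $\pi_\tau(\vartheta)>\delta$ implies $E_\tau(\vartheta)<1/\delta$. The full-support likelihood-ratio mixture satisfies $E_\tau(\vartheta)\ge w_{\min}\prod_{s\le\tau}p_{\theta^\star}(M_s\mid A_s)/p_\vartheta(M_s\mid A_s)$. Hence the log-likelihood-ratio sum $L_\tau(\vartheta)$ must stay below $b_\delta=\log(1/(\delta w_{\min}))$.

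Write $L_\tau=\sum_s\mathbb E[\ell_s\mid\mathcal F_{s-1}]+\sum_s(\ell_s-\mathbb E[\ell_s\mid\mathcal F_{s-1}])$. The conditional mean of $\ell_s$ is $\sum_A\rho_s(A)d_A(\theta^\star,\vartheta)$. Here I would invoke the frontier information-rate condition Eq.~\eqref{app:eq:frontier-information-rate}: the max-min design of Eq.~\eqref{eq:cut-directed-design} guarantees that this drift is at least $\Gamma_\varepsilon$ for every pair $(\theta^\star,\vartheta)$ that remains in the frontier while $\vartheta\in V_s$. The delicate point is the case where $\mathsf O_s^+$ changes or $\vartheta$ is already eliminated; in the latter case the event has already been avoided, so it suffices to control the drift up to elimination. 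I expect that the assumed rate condition is stated exactly to cover this case.

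The drift therefore accumulates to at least $\tau\Gamma_\varepsilon$, and the event requires the martingale part to fall below $-(\tau\Gamma_\varepsilon-b_\delta)_+$. The conditional sub-Gaussian proxy $\sigma_\varepsilon^2$ makes $\exp(\lambda M_\tau-\lambda^2\tau\sigma_\varepsilon^2/2)$ a supermartingale. Markov's inequality with the optimal $\lambda$ gives the Azuma--Hoeffding bound $\exp\{-(\tau\Gamma_\varepsilon-b_\delta)_+^2/(2\tau\sigma_\varepsilon^2)\}$, and capping at $1$ yields $R_\tau(\varepsilon)$. Combining the three terms and minimizing over $\tau$ gives Eq.~\eqref{eq:optimized-conditional-recovery-bound}.

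\textbf{Utility version.} Finally, on $\{\mathsf C_t\cap\mathcal H_\varepsilon^\star\neq\varnothing\}$ we have $U_t\ge u(H)\ge U^\star-\varepsilon$ for the certified target $H$. Hence $\{U_t<U^\star-\varepsilon\}$ is contained in the failure event, and the same bound applies.
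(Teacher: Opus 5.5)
Your decomposition matches the paper's. You split at $\tau$, handle readiness via $E_\tau(\vartheta)\ge w_{\min}e^{L_\tau(\theta^\star,\vartheta)}$ plus a sub-Gaussian lower tail and a union bound over the $m_\varepsilon$ outside states, use a geometric generation factor, take the infimum over $\tau$, and use the inclusion $\{U_t<U^\star-\varepsilon\}\subseteq\{\mathsf C_t\cap\mathcal H_\varepsilon^\star=\varnothing\}$. The probabilistic bookkeeping, however, has a real gap, starting with the generation term.

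\textbf{Generation term.} You bound $\mathbb P(\cdot\mid G_\tau\cap\mathcal E_\delta)$ and justify the tower-property chaining because $G_\tau$ is $\mathcal D_\tau$-measurable. But $\mathcal E_\delta=\{\theta^\star\in V_s\ \forall s\ge0\}$ depends on the entire future record. Assumption~\ref{ass:generator-coverage} only lower-bounds $\mathbb P(Z_{r,\varepsilon}=1\mid\mathcal I_r)$. The outcome of the round-$r$ expansion changes $\mathsf C_r$, $U_r$, $\mathsf O_r^+$, $\mathcal P_r$, hence $\rho_{r+1}$ and the law of all later observations, including whether $\theta^\star$ is later falsely excluded. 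Conditioning on $\mathcal E_\delta$ can therefore reweight toward failing branches, for instance if success steers the design toward actions under which $\theta^\star$ is excluded with positive probability while failure does not. So neither the per-round factor $1-q_\varepsilon$ nor $(1-q_\varepsilon)^{t-\tau+1}$ follows for your conditional probability.

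The missing idea is never to condition on $\mathcal E_\delta$. Instead, bound the \emph{joint} probability $\mathbb P(\mathcal E_\delta\cap G_\tau\cap N_t)$, where $N_r:=\{\mathsf C_r\cap\mathcal H_\varepsilon^\star=\varnothing\}$ and $N_r^-$ is its pre-expansion analogue. Use the finite-history coverage events $\mathcal E_{\delta,r}:=\{\theta^\star\in V_j,\ 0\le j\le r\}\in\mathcal I_r$. These already supply what the generation step needs: $\varnothing\ne V_r\subseteq H_\varepsilon$, hence $W_{r,\varepsilon}$, and retention of certified targets. Set $B_r:=\mathcal E_{\delta,r}\cap G_\tau\cap\bigcap_{j=\tau}^rN_j$, with the $\mathcal I_r$-measurable events $D_\tau:=\mathcal E_{\delta,\tau}\cap G_\tau\cap N_\tau^-$ and $D_r:=B_{r-1}\cap\mathcal E_{\delta,r}\cap N_r^-$. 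Then $B_r\subseteq D_r\cap\{Z_{r,\varepsilon}=0\}$, so $\mathbb P(B_r)\le(1-q_\varepsilon)\mathbb P(B_{r-1})$, and $\mathcal E_\delta\cap G_\tau\cap N_t\subseteq B_t$. This is exactly the paper's route.

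\textbf{Elimination term.} The same issue arises here. The information-rate hypothesis \eqref{app:eq:frontier-information-rate} is assumed only almost surely on $\mathcal E_\delta$. The max-min design does not by itself supply drift $\Gamma_\varepsilon$: the pair $(\theta^\star,\vartheta)$ may be absent from $\mathcal P_s$, and $\Gamma_s^\star$ is data-dependent. So you cannot bound $\mathbb P(G_\tau^c)$ by $R_\tau(\varepsilon)$; doing so honestly would cost an extra $\delta$. Bound $\mathbb P(\mathcal E_\delta\cap G_\tau^c)$ instead, by the event inclusion $\mathcal E_\delta\cap\{\vartheta\in V_\tau\}\subseteq\{\sum_{s\le\tau}(X_s(\vartheta)-\mu_s(\vartheta))<b_\delta-\tau\Gamma_\varepsilon\}$. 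Then apply the unconditional sub-Gaussian lower tail, again not a concentration statement conditional on $\mathcal E_\delta$.

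With the corrected decomposition $\mathbb P(N_t)\le\mathbb P(\mathcal E_\delta^c)+\mathbb P(\mathcal E_\delta\cap G_\tau^c)+\mathbb P(\mathcal E_\delta\cap G_\tau\cap N_t)$, your three estimates give $\delta+R_\tau(\varepsilon)+(1-q_\varepsilon)^{t-\tau+1}$, and the theorem follows.
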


\textbf{Interpretation.}
For all large $t$ with $\tau=\lfloor t/2\rfloor$,
\begin{align*}
\mathcal B_t(\varepsilon)
&\leq
m_\varepsilon
 e^{-\lfloor t/2\rfloor\Gamma_\epsilon^2/(8\sigma_\epsilon^2)}
\nonumber +
e^{-q_\varepsilon(t-\lfloor t/2\rfloor+1)}
\longrightarrow0.
\label{eq:recovery-vanishing-main}
\end{align*}
Hence, a certified near-optimal hypothesis is eventually obtained and retained almost surely on $\mathcal E_\delta$ along the nonterminated continuation (see Appendix~\ref{app:recovery-details}).

\subsection{Related Work}
\textbf{Autonomous scientific discovery.}
Prior work explores hypotheses via search and generation~\citep{gottweis2025towards,jansen2025codescientist,yamada2025ai,ghareeb2026multi,marwitz2026predicting},
ranking them by predefined, human/LLM, or Bayesian-surprise scores~\citep{zhang2024omni,lu2024ai,jansen2025codescientist,itti2009bayesian,agarwal2025autodiscovery}.
These methods do not formulate discovery as abduction.

\textbf{Statistical hypothesis evaluation.}
Prior work evaluates hypotheses with $p$-values~\citep{lin2025llms,movva2025sparse,brightthonney2025autoscidact}, embeds tests in generation~\citep{sengupta2026experigen,zhao2026discoper}, regret-based~\citep{ziomek2024bayesian,ziomek2025time,xu2024principled}, or divergence-based~\citep{fujisawa2025scalable, fujisawa2026generalised} but within predefined models. Sequential $e$-value is applied to LLM-based discovery~\citep{huang2025popper,xu2024online} likewise do not address open-ended adaptive generation. $e$-possibility~\citep{martin2026regularized} covers data-dependent claims and ours is the first to integrate it with growing hypotheses, fixed background preference, and frontier-based experimental design.

\textbf{Online hypothesis testing.}
Our design follows Chernoff-style active testing~\citep{chernoff1959sequential,naghshvar2013active,garivier2016optimal,mukherjee2022chernoff},
but gives maximin optimality on a data-dependent possibility frontier rather than asymptotic fixed-family optimality (Appendix~\ref{app:scope-optimality}).

\section{Synthetic Experiment}
\label{sec:experiment}

We first evaluate our framework in a controlled synthetic setting where the ground-truth state $\theta^\star$ is known. This allows us to test the two central challenges of open-ended discovery in isolation: the initial hypothesis set is incomplete, so discovering $\theta^\star$ requires hypothesis expansion, while the background knowledge is informative but misspecified, so successful abduction must balance empirical validity against departure from prior knowledge. The construction mirrors the finite-state setting of our theoretical analysis; full experimental details are provided in Appendix~\ref{app:synthetic}.

\subsection{Environmental Setup}
\begin{figure}
    \centering
    \includegraphics[width=0.8\linewidth]{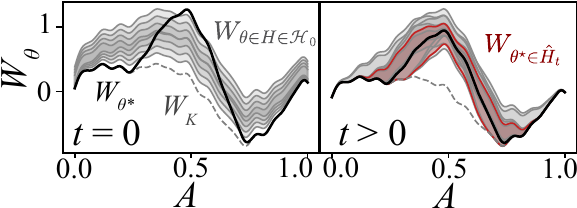}
    \caption{Synthetic discovery setup:
    The environment $W_{\theta^\star}$ differs from the background model $W_K$ (dashed). At $t=0$ (left), none of the initial hypotheses    $H\in\mathcal H_0$ contains $\theta^\star$. For $t>0$ (right), the hypothesis space is expanded to discover a hypothesis containing $\theta^\star$.
    }
    \vspace{-1em}
    \label{fig:setup}
\end{figure}
\textbf{Ground-truth.}
Consider an environment $W_{\theta}$:
\[
W_\theta(A)=W_K(A)+
\frac a2\left(1+\cos\frac{\pi(A-A_0)}{w}\right)
\mathbbm 1_{\{|A-A_0|\le w\}},
\]
where $\theta=(a,w)$ and the ground-truth parameter is $\theta^\star=(1.00,0.28)$. The action space $\mathcal A$ consists of 73 equispaced locations. At round $t$, selecting $A_t\in\mathcal A$ produces
$M_t=W_{\theta^\star}(A_t)+\epsilon_t,$ where $\epsilon_t\sim\mathcal N(0,\sigma_E^2).$
The learner does not observe $\theta^\star$ and must identify a hypothesis containing it from sequential experiments.

\textbf{Hypothesis generation.}
We discretize $\Theta$ into 1,540 parameter states and construct 44 reachable hypotheses $\mathcal H_\infty$, of which three contain $\theta^\star$. The initial set $\mathcal H_0$ is deliberately misspecified: no $H\in\mathcal H_0$ contains $\theta^\star$. Consequently, a closed-world algorithm within $\mathcal H_0$ cannot recover the ground truth. Our method expands the candidate set according to Alg.~\ref{alg:expand}, with a proposal budget of $B=4$ per round, and seeks an abductively preferred certified hypothesis containing $\theta^\star$.

\textbf{Evidential possibility.}
From the accumulated observations $\mathcal D_t=\{(A_\tau,M_\tau)\}_{\tau=1}^t$, we construct a Gaussian-mixture likelihood-ratio e-process $E_t(\theta)$~\citep{howard2021time}. This induces the evidential possibility in Eq.~\eqref{eq:epos_theta} and the persistent version space $V_t^\delta$ in Eq.~\eqref{eq:persistent-contour}. We use $\delta=0.1$ to classify hypotheses as in \S\ref{sec:evidential}.

\textbf{Background possibility.}
Background knowledge $K$ is obtained from historical observations $\mathcal D_K=\{(A_i^K,M_i^K)\}_{i=1}^N$ and represented by the Gaussian process (GP: \citealt{rasmussen2004gaussian})
$W^K\mid\mathcal D_K\sim\mathcal{GP}(W_K,C_K)$.
Crucially, this background model is informative but misspecified: $W_K\neq W_{\theta^\star}$ (Fig.~\ref{fig:setup}). This mimics, for example,
a setting in which historical measurements remain useful but the current environment has shifted. Using Gaussian possibility \citep{ristic2020target,denoeux2014likelihood}, we construct the background possibility $\pi^K$ and compute the utility $u(H)$ in Eq.~\eqref{eq:guaranteed-background}.

\textbf{Experimental design.}
We instantiate $d_A(\theta,\vartheta)$ using the KL divergence between the Gaussian observations. At each round, the next experiment is selected using the possibility-frontier maximin design in Eq.~\eqref{eq:cut-directed-design}, and the complete
procedure follows Alg.~\ref{alg:possibility-frontier-search}.

\begin{figure}
    \centering
    \includegraphics[width=0.8\linewidth]{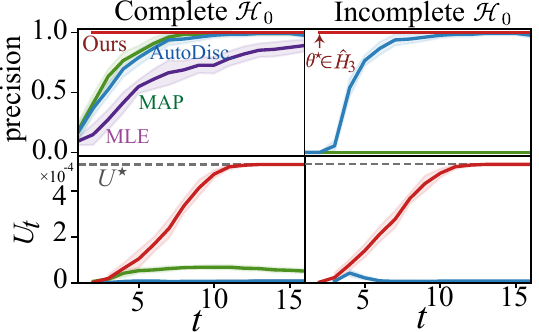}
    \caption{
    Precision (upper) and abductive utility (lower) under complete (left) incomplete $\mathcal H_0$. Ours returns a certified hypothesis at $t=2$ or $3$ with empirical precision $1.0$, and its abductive utility subsequently improves and converges to $U^\star$. Shaded regions denote 95\% confidence intervals.}
    \label{fig:exp2_precision_abductive_pos}
    \vspace{-1em}
\end{figure}
\begin{figure}[b]
    \centering
    \includegraphics[width=0.95\linewidth]{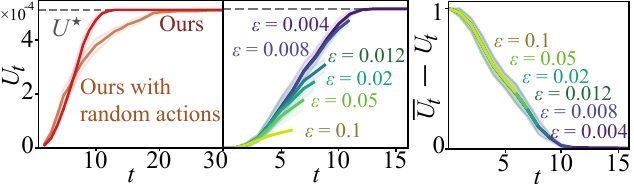}
    \caption{Ablation study: (a) experimental design and (b) stopping condition for $U_t$ and (c) $\overline{U}_t - U_t$. Ours reaches $U^\star$ faster than random actions, and larger tolerance $\varepsilon$ stops earlier following the gap.}
    \label{fig:exp2_ablation}
\end{figure}
\subsection{Experiments}
\textbf{Baselines.}
We compare Possibility Frontier Search against MLE, MAP, and AutoDiscovery~\citep{agarwal2025autodiscovery}.
MLE and MAP select hypotheses using, respectively, Gaussian likelihood and posterior probability under the background GP prior. Both are restricted to the initial hypothesis set $\mathcal H_0$, and thus represent closed-world inference when $\mathcal H_0$ is incomplete. AutoDiscovery, in contrast, adaptively expands the hypothesis space using Monte Carlo tree search~\citep{silver2016mastering} and selects hypotheses according to Bayesian surprise. Because these baselines do not prescribe an adaptive experimental-design policy over $\mathcal A$, we use uniform random sampling, $A_t\sim\mathcal U(\mathcal A)$, for their experiments. Our method instead selects experiments using the possibility-frontier design in Eq.~\eqref{eq:cut-directed-design}.

\textbf{Evaluation.}
We evaluate (i) \emph{precision}, indicating whether a returned hypothesis contains or not $\theta^\star \in \widehat{H}_t$, and (ii) the abductive utility $U_t$ in Eq.~\eqref{eq:abductive-possibility}. Results are averaged over eight random seeds; hence the reported precision may take intermediate values across runs.

\textbf{Effect of incomplete $\mathcal H_0$.}
We compare initial archives containing at least one true claim, $\exists H\in\mathcal H_0:\theta^\star\in H$, with archives containing no true claim in Fig.~\ref{fig:exp2_precision_abductive_pos}. Under complete $\mathcal H_0$, all methods eventually reach $\theta^\star \in \widehat{H}_t$, while ours does so already at $t=2$. In contrast, under incomplete $\mathcal H_0$, only AutoDiscovery and ours can reach a true hypothesis, as vanilla MLE and MAP do not handle an expanding hypothesis space. On abductive utility $U_t$, ours converges to $U^\star$ while others cannot, illustrating empirical recovery behavior. Fig.~\ref{fig:exp2_example} illustrates the corresponding search trajectories. AutoDiscovery can find a true hypothesis, but its surprise objective favors hypotheses far from the prior, preventing convergence to $U^\star$.

\textbf{Ablation study.} 
Fig.~6(a) compares frontier allocation (Eq.~\eqref{eq:cut-directed-design}) with random actions, $A_t\sim\mathrm{Unif}(\mathcal A)$. The frontier policy finds $U^\star$ sooner, demonstrating the benefit of adaptive experimental design. 
Panels (b)(c) illustrate how larger tolerances relax the observable gap threshold and trigger earlier stopping in these runs. 

\begin{figure}
    \centering
    \includegraphics[width=0.7\linewidth]{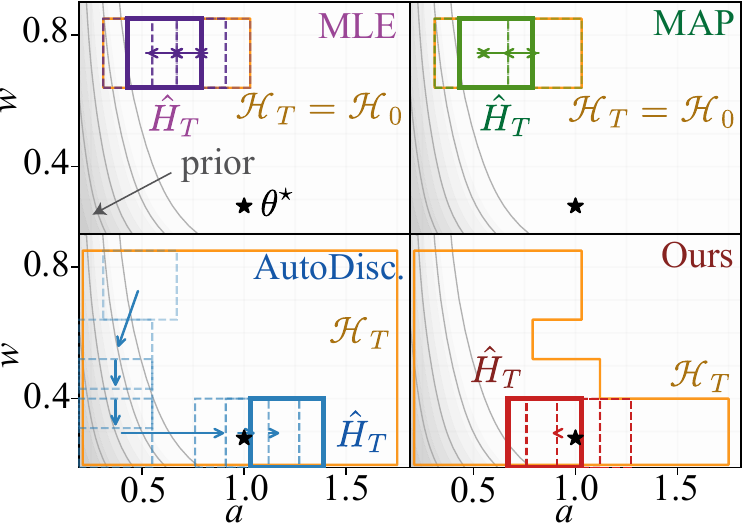}
    \caption{
MLE and MAP remain confined to the initial set $\mathcal H_0$. AutoDiscovery expands the search toward hypotheses that are most surprising relative to the prior, whereas our method rapidly identifies a certified hypothesis $H\in\mathsf C_t$ requiring minimal revision of the background knowledge. Orange box denotes $\mathcal H_{\leq T}$.}
    \label{fig:exp2_example}
    \vspace{-1em}
\end{figure}

\section{Real-world dataset}
Next, we apply our framework to natural-language hypothesis discovery on real-world datasets. These real-data experiments are exploratory case studies using approximate statistical tests and LLM-based generation, for which exact anytime validity and recovery guarantees are not established. Full experimental details are provided in Appendix~\ref{app:natural_language_discovery}.

\subsection{Environmental Setup}\label{sec:real_env}
\textbf{Datasets as environments.}
Following DiscoveryBench~\citep{majumder2025discoverybench}, two real-world datasets \emph{Incarceration}/\emph{SES} are used as empirical environments.

\textbf{Research question and gold hypothesis.}
Unlike \S\ref{sec:experiment}, the environmental state is unknown. DiscoveryBench instead provides a research question to the learner and a held-out \emph{gold hypothesis} for evaluation, e.g.,
``How did the wealth levels of individuals with a history of incarceration compare to those never incarcerated in 1996?''
The gold hypothesis states that previously incarcerated individuals have lower wealth than those never incarcerated.
Each dataset starts with five natural-language hypotheses, each paired with its logical complement.

\textbf{Parameterization.}
We represent the experimentally accessible environment by population quantities $\theta=(\theta_1,\ldots,\theta_J)\in\Theta$.
An action $A\in\mathcal A$ is a \emph{measurement query} returning a noisy measurement of $\theta_{j_A}$. For example, let $\theta_j=\mu_{\mathrm{inc}}-\mu_{\mathrm{never}}$, the difference in mean wealth between individuals with and without an incarceration history. Then $H=\{\theta:\theta_j<0\}$, and the query returns $M=\widehat{\mu}_{\mathrm{inc}}-\widehat{\mu}_{\mathrm{never}}$ on unseen data.
Because \(\mathcal A\) is fixed before discovery, \(J\) and \(\Theta_{\mathrm{grid}}\) remain fixed.

\textbf{Experimental budget.}
We randomly partition each dataset into 16 disjoint folds. At each round, we select up to two measurement queries and fix their statistical tests before observing the next unseen fold, on which the queries are then evaluated. This yields at most 32 test executions over 16 rounds, enabling sequential evidence accumulation while limiting overfitting.

\textbf{Evidential possibility.}
Following POPPER~\citep{huang2025popper}, each measurement query has a statistical test fixed before its data fold is observed. Its $p$-value is converted to an $e$-value. Because up to two tests may share a fold, we average their statewise contributions within each fold and multiply these averages sequentially across folds. The resulting $e$-process defines the evidential possibility and necessity in Eq.~\eqref{eq:epos_theta}.

\textbf{Background possibility.}
We elicit background possibility from an LLM for each estimand, following \citet[Algorithm~2]{yang2026verbalizing}. The fixed joint-state possibility contour is then defined by $\pi^K(\theta)=\min_{j=1,\ldots,J}\pi_j^K(\theta_j)$. Because \(u(H)\) minimizes over allowed states, unspecified estimands can lower the score.

\textbf{Experimental design.}
At each round, POPPER generates candidate measurement queries for hypotheses in $\mathcal H_{\le t}$, excluding unsupported or duplicate queries.
For each query $A$, let $j_A$ denote its target estimand. We model its measurement as $\mathcal N(\theta_{j_A},\sigma_A^2)$, where $\sigma_A$ is the anticipated standard error, and apply the maximin design in Eq.~\eqref{eq:cut-directed-design} to select among the candidates.

\subsection{Experiments}
We use \texttt{GPT-5}~\citep{singh2025openai} for natural-language hypothesis generation and LLM-based evaluation.

\textbf{Baselines.}
We compare against AutoDiscovery~\citep{agarwal2025autodiscovery} under the same datasets, initial conditions, and experimental budget. We omit MLE and MAP because they operate over a prespecified statistical model class and do not directly provide open-ended natural-language hypothesis evaluation in this setting.

\textbf{Metrics.}
We evaluate hypothesis selection using three metrics:
(i) Hypothesis Matching Score (HMS)~
\citep{bragg2026astabench}, which measures agreement between the selected and gold hypotheses by an LLM;
(ii) abductive utility $U_t$ in Eq.~\eqref{eq:abductive-possibility}; and (iii) cardinality $|\widehat H_t|$ in Eq.~\eqref{eq:specificity-tiebreak}.
For AutoDiscovery, we retrospectively evaluate $U_t$ using the same accumulated evidence.

\textbf{Results.}
Fig.~\ref{fig:discoverybench} shows that our method eventually achieves higher HMS and $U_t$, and lower $|\hat H_t|$ on both datasets. Notably, $U_t$ and $|\hat H_t|$ closely track HMS, suggesting that our utility can monitor discovery progress without a gold hypothesis. This makes it promising for real-world scientific discovery, where no ground-truth hypothesis is available.

\begin{figure}
    \centering
    \includegraphics[width=\linewidth]{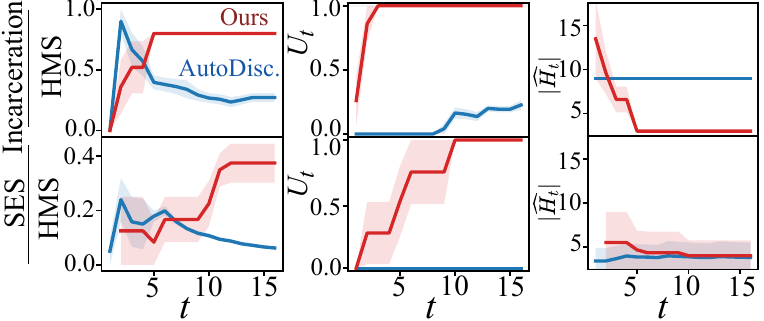}
    \caption{
    Natural-language discovery results. Left: HMS. Middle: abductive utility. Right: cardinality $|\widehat H_t|$. Shaded regions indicate $\pm$ standard error.}
    \label{fig:discoverybench}
    \vspace{-1em}
\end{figure}

\section{Conclusion and limitation}
We introduced AASD as a new problem setting, abductive utility as a discovery metric, and possibility frontier search as its solving algorithm. 
Our notion of open-endedness concerns expansion of candidate hypotheses within a fixed environmental model, action space, and background contour, rather than online expansion of \(\Theta\) or revision of \(K\). Moreover, abductive selection depends on the quality and calibration of the background possibility model, so misspecification may bias which admissible hypotheses are preferred.

\subsubsection*{Acknowledgments}
This work was supported by JST, ARiSE, Japan Grant Number JPMJAR2628.
TW was partially supported by JSPS KAKENHI (26K21188) and RIKEN Incentive Research Project.

\bibliographystyle{plainnat_max1}
\bibliography{references}

\clearpage
\appendix
\thispagestyle{empty}

\onecolumn
\aistatstitle{Supplements}

\section{Hypothesis Representation and Adaptive Validity}
\subsection{Hypotheses as subsets of the state space}
\label{app:hypothesis-representation}
A claim is identified with the states under which its experimentally
testable implications hold:
\begin{equation}
H:=\{\theta\in\Theta:W_\theta\text{ satisfies the claim's testable implications}\}.
\label{app:eq:hypothesis-representation}
\end{equation}
Semantically identical claims have the same membership predicate and
are deduplicated. A natural-language claim must be compiled into this
predicate before certification. A certificate covers that predicate,
not additional causal or mechanistic wording absent from the statistical
representation; see Appendix~\ref{app:implementation}.

\subsection{Adaptive-hypothesis validity}
\label{apd:simultaneous-hypothesis}
Use the persistent contour $\pi_t$ from Eq.~\eqref{eq:epos_theta}.
For every $\theta$ and every $\gamma\in(0,1)$,
\begin{equation}
\mathbb P_\theta(\exists t:\pi_t(\theta)\le\gamma)
=\mathbb P_\theta(\exists t:E_t(\theta)\ge1/\gamma)
\le\gamma.
\label{app:eq:statewise-validity}
\end{equation}
All events and optimizations are assumed measurable. The statement holds
under the actual adaptive data-collection protocol used to construct the
statewise evidence.

\begin{corollary}[Generic adaptive-hypothesis validity]
\label{cor:simultaneous-hypothesis}
Let the candidate archives be data dependent. For local thresholds
$0<\alpha\le\beta<1$, define
\begin{equation}
\mathsf C_t(\alpha,\beta):=
\{H\in\mathcal H_{\le t}:\Pi_t(H)>\alpha,\ N_t(H)\ge1-\beta\}.
\label{eq:evidence-admissibility}
\end{equation}
Then
\begin{align}
\mathbb P_{\theta^\star}
(\exists t, H\in\mathcal H_{\le t}:\theta^\star\in H,
\Pi_t(H)\le\alpha)&\le\alpha,
\label{app:eq:true-hypothesis-error}\\
\mathbb P_{\theta^\star}
(\exists t, H\in\mathcal H_{\le t}:\theta^\star\notin H,
N_t(H)\ge1-\beta)&\le\beta.
\label{app:eq:false-certification-error}
\end{align}
In particular, with probability at least $1-\beta$, every member of every
$\mathsf C_t(\alpha,\beta)$ is true.
\end{corollary}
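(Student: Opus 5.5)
The plan is to reduce both inequalities to the statewise validity bound \eqref{app:eq:statewise-validity} applied at the single true state $\theta^\star$. The adaptively generated archives $\mathcal H_{\le t}$ never need to be controlled directly. The key observation is that each bad event is contained in an event involving only the persistent contour evaluated at $\theta^\star$, which does not depend on which hypotheses were generated. A union bound over $t$ or over $H$ is therefore unnecessary.

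For \eqref{app:eq:true-hypothesis-error}, fix any outcome in the event. There exist $t$ and $H\in\mathcal H_{\le t}$ with $\theta^\star\in H$ and $\Pi_t(H)\le\alpha$. Since $\Pi_t(H)=\sup_{\theta\in H}\pi_t(\theta)\ge\pi_t(\theta^\star)$, we get $\pi_t(\theta^\star)\le\alpha$. The whole event is therefore contained in $\{\exists t:\pi_t(\theta^\star)\le\alpha\}$, and \eqref{app:eq:statewise-validity} with $\gamma=\alpha$ and $\theta=\theta^\star$ bounds its probability by $\alpha$.

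For \eqref{app:eq:false-certification-error}, fix an outcome with some $t$ and $H\in\mathcal H_{\le t}$ such that $\theta^\star\notin H$ and $N_t(H)\ge1-\beta$. Then $\theta^\star\in H^c$. The necessity condition gives $\Pi_t(H^c)\le\beta$, so $\pi_t(\theta^\star)\le\sup_{\theta\in H^c}\pi_t(\theta)\le\beta$. Again the event lies in $\{\exists t:\pi_t(\theta^\star)\le\beta\}$, which has probability at most $\beta$. For the final claim, any false member of some $\mathsf C_t(\alpha,\beta)$ is a false $H$ with $N_t(H)\ge1-\beta$. The complement of the event in \eqref{app:eq:false-certification-error} therefore has probability at least $1-\beta$, and on it every certified member is true. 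The condition $\Pi_t(H)>\alpha$ is not needed for this step.

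The only delicate point is the validity of \eqref{app:eq:statewise-validity} under data-dependent hypothesis generation and adaptive actions. The argument must make clear that $E_t(\theta^\star)$ is an $e$-process under $\mathbb P_{\theta^\star}$ for the actual adaptive protocol, including the actions chosen via \eqref{eq:cut-directed-design} and the generator. It must also make clear that generation influences only which sets are inspected, never the statewise contour. I expect to handle this by noting that hypotheses and actions are predictable functions of past data (and independent generator randomness), so the $e$-process property at $\theta^\star$ is preserved. Measurability of the events $\{\exists t,H\}$ is taken from the standing assumption. The containments themselves are pointwise and therefore hold uniformly over $t$ and $H$.
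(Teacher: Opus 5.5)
Your proposal is correct and matches the paper's proof: both error events are reduced pointwise to the single statewise event $\{\exists t:\pi_t(\theta^\star)\le\gamma\}$, via $\pi_t(\theta^\star)\le\Pi_t(H)$ when $\theta^\star\in H$ and via $\pi_t(\theta^\star)\le\Pi_t(H^c)=1-N_t(H)$ when $\theta^\star\notin H$, so no union bound over claims is needed. Like you, the paper takes the statewise bound under the actual adaptive protocol as a standing assumption and does not prove it inside the corollary.
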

\begin{proof}
If $\theta^\star\in H$ and $\Pi_t(H)\le\alpha$, then
$\pi_t(\theta^\star)\le\alpha$. If $\theta^\star\notin H$ and
$N_t(H)\ge1-\beta$, then
\[
\pi_t(\theta^\star)\le\Pi_t(H^c)=1-N_t(H)\le\beta.
\]
Each error reduces to its respective statewise event in
Eq.~\eqref{app:eq:statewise-validity}. No union bound over claims is used.
\end{proof}

\paragraph{Equal-threshold version space.}
With the main-text threshold $\delta$,
\begin{equation}
\mathcal E_\delta=\{\theta^\star\in V_t\text{ for every }t\ge0\},
\qquad
\mathbb P_{\theta^\star}(\mathcal E_\delta)\ge1-\delta.
\label{app:eq:coverage-event}
\end{equation}
In particular,
\[
\mathbb P_{\theta^\star}(\exists t:V_t=\varnothing)\le\delta.
\]
The identities
\[
\Pi_t(H)>\delta\ \Longleftrightarrow\ H\cap V_t\ne\varnothing,
\qquad
N_t(H)\ge1-\delta\ \Longleftrightarrow\ V_t\subseteq H
\]
follow from the definitions of supremum and of $V_t$. Together with
Eq.~\eqref{app:eq:coverage-event}, they prove
Proposition~\ref{prop:anytime-valid-version-space}.
On $\mathcal E_\delta$, a true retained claim cannot be pruned and a
certified claim remains certified at later rounds. Therefore
$\mathsf C_t\subseteq\mathsf C_{t+1}$ and $U_t$ is nondecreasing on that
event. These assertions apply to the procedure's continuation when its
stopping condition is disabled.

\section{Guaranteed Background Possibility and Refinement}
\label{apd:minimal-epistemic-change}
\paragraph{Uniform accommodation.}
The proof of Proposition~\ref{thm:minimal-epistemic-change} appears in the
main text. Its distance is between statewise contours. The optimization
allows arbitrary normalized measurable revised contours and imposes no
necessity condition on the complement of the selected claim. It therefore
measures uniform accommodation, rather than the former binary-assessment
adoption cost. The background contour used by discovery itself stays fixed.

\begin{proposition}[Preference against strict weakening]
\label{prop:strict-refinement}
For nonempty $H_1\subseteq H_2$, $u(H_1)\ge u(H_2)$.
If $\Theta$ is finite and $H_1\subsetneq H_2$, then
\[
(u(H_1),-|H_1|)>_{\mathrm{lex}}(u(H_2),-|H_2|).
\]
Consequently, the reporting rule cannot select a certified hypothesis
when an available certified strict refinement exists.
\end{proposition}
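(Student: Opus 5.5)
The argument uses only the definition of guaranteed possibility in Eq.~\eqref{eq:guaranteed-background} and the reporting rule in Eq.~\eqref{eq:specificity-tiebreak}. There are three steps: antitonicity of $u$, a lexicographic comparison, and the consequence for the reporting rule.

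\textbf{Antitonicity.} Since $\varnothing\ne H_1\subseteq H_2$, the set $\{\pi^K(\theta):\theta\in H_1\}$ is a nonempty subset of $\{\pi^K(\theta):\theta\in H_2\}$. An infimum over a subset is at least the infimum over the superset, so
\[
u(H_1)=\inf_{\theta\in H_1}\pi^K(\theta)\ \ge\ \inf_{\theta\in H_2}\pi^K(\theta)=u(H_2).
\]
Nonemptiness guarantees both infima are taken over nonempty sets, so no convention for $\inf\varnothing$ is needed.

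\textbf{Lexicographic comparison.} Let $\Theta$ be finite and $H_1\subsetneq H_2$. Then $|H_1|<|H_2|$, i.e.\ $-|H_1|>-|H_2|$. We split into two cases.
\begin{itemize}
\item If $u(H_1)>u(H_2)$, the first coordinate already decides the order strictly.
\item Otherwise antitonicity forces $u(H_1)=u(H_2)$, and the second coordinate decides it strictly.
\end{itemize}
In both cases $(u(H_1),-|H_1|)>_{\mathrm{lex}}(u(H_2),-|H_2|)$. Finiteness matters only here: it makes $|H_2|$ a finite integer, so the strict inequality $|H_1|<|H_2|$ holds. Without finiteness, two infinite sets could share a cardinality.

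\textbf{Reporting rule.} Suppose $H_2\in\mathsf C_t$ and some $H_1\in\mathsf C_t$ satisfies $H_1\subsetneq H_2$. Eq.~\eqref{eq:specificity-tiebreak} first restricts to hypotheses with $u(H)=U_t$, then minimizes $|H|$, which is exactly lexicographic maximization of $(u(H),-|H|)$ over $\mathsf C_t$. We show $H_2$ cannot be selected.
\begin{itemize}
\item If $u(H_2)<U_t$, then $H_2$ is excluded at the first stage.
\item If $u(H_2)=U_t$, then $U_t\ge u(H_1)\ge u(H_2)=U_t$, so $u(H_1)=U_t$. Thus $H_1$ is also in the tie set, and $|H_1|<|H_2|$ means $H_2$ is not a cardinality minimizer.
\end{itemize}

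The main point needing care is the identification of Eq.~\eqref{eq:specificity-tiebreak} with lexicographic maximization, in particular the observation that $u(H_1)$ cannot exceed $U_t$, which follows directly from the definition $U_t=\sup_{H\in\mathsf C_t}u(H)$ since $H_1\in\mathsf C_t$. Everything else is immediate.
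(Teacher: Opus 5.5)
Your proof is correct and takes essentially the same approach as the paper: the infimum over a subset dominates the infimum over the superset, then a case split on strict versus tied utility, with finite strict inclusion giving $|H_1|<|H_2|$. Your explicit check that $u(H_2)=U_t$ forces $u(H_1)=U_t$ spells out the reporting-rule consequence that the paper states in one line.
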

\begin{proof}
The infimum over a subset is at least the infimum over the containing
set. If the inequality in utility is strict, the first coordinate decides
selection. Otherwise the utilities tie and finite strict inclusion gives
$|H_1|<|H_2|$, so the second coordinate decides selection.
\end{proof}
This is an archive-relative preference, not a guarantee that every useful
refinement has already been generated or certified. It requires neither
an antichain nor disjoint candidates. It does not claim that cardinality
is a complete measure of scientific value.

\section{Conditional Hypothesis Generation}
\label{app:generator-coverage}
Let $\mathcal I_t$ denote the history after the observations through round
$t$ but immediately before expansion. Let $\mathcal I_t^+$ include that
round's expansion and scoring decisions, before choosing experiment
$t+1$. Thus $\mathcal I_t\subseteq\mathcal I_t^+\subseteq\mathcal I_{t+1}$.
Let $Q_t(\cdot\mid\mathcal I_t)$ denote the raw proposal kernel.

Define the set of all certifiable proper claims in the generator's
language, not just those already in the archive:
\begin{equation}
\mathcal R_t^{\mathrm{cert}}:=
\{H:\varnothing\ne V_t\subseteq H,\ H\ne\Theta\},
\qquad
p_t^{\mathrm{cert}}:=Q_t(\mathcal R_t^{\mathrm{cert}}\mid\mathcal I_t).
\label{app:eq:feasible-generation-set}
\end{equation}
For $p_t^{\mathrm{cert}}>0$, the distribution of the first certified draw
from a fixed within-round kernel is
\begin{equation}
Q_t^{\mathrm{cert}}(D\mid\mathcal I_t)
:=\frac{Q_t(D\cap\mathcal R_t^{\mathrm{cert}}\mid\mathcal I_t)}
{p_t^{\mathrm{cert}}}.
\label{app:eq:conditional-proposal-kernel}
\end{equation}
Here $D$ is a measurable collection of claims, not a data record.

\begin{proposition}[Rejection-conditioned expansion]
\label{prop:rejection-conditioned-expansion}
Suppose $\widetilde H_{t,1},\widetilde H_{t,2},\ldots$ are conditionally
i.i.d. from $Q_t(\cdot\mid\mathcal I_t)$, and set
$J_t:=\inf\{j\ge1:\widetilde H_{t,j}\in\mathcal R_t^{\mathrm{cert}}\}$.
If $p_t^{\mathrm{cert}}>0$, then $J_t<\infty$ almost surely and
$\widetilde H_{t,J_t}$ has law $Q_t^{\mathrm{cert}}$. Moreover,
\begin{equation}
\mathbb P(J_t>b\mid\mathcal I_t)=(1-p_t^{\mathrm{cert}})^b,
\qquad
\mathbb E[J_t\mid\mathcal I_t]=(p_t^{\mathrm{cert}})^{-1}.
\label{app:eq:rejection-complexity}
\end{equation}
\end{proposition}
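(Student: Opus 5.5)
The proof reduces to the standard analysis of rejection sampling with a geometric waiting time, carried out conditionally on $\mathcal I_t$. Conditioning on $\mathcal I_t$ fixes the kernel $Q_t(\cdot\mid\mathcal I_t)$ and the set $\mathcal R_t^{\mathrm{cert}}$, because $V_t$ is $\mathcal I_t$-measurable. The indicators $\mathbbm 1\{\widetilde H_{t,j}\in\mathcal R_t^{\mathrm{cert}}\}$, $j=1,2,\ldots$, are then conditionally i.i.d.\ Bernoulli with success probability $p_t^{\mathrm{cert}}$. We assume $\mathcal R_t^{\mathrm{cert}}$ is measurable in the claim space, which the appendix has already stipulated.

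\textbf{Tail and expectation.} The event $\{J_t>b\}$ is exactly $\bigcap_{j=1}^b\{\widetilde H_{t,j}\notin\mathcal R_t^{\mathrm{cert}}\}$. By conditional independence this has probability $(1-p_t^{\mathrm{cert}})^b$. Letting $b\to\infty$ with $p_t^{\mathrm{cert}}>0$ gives $\mathbb P(J_t=\infty\mid\mathcal I_t)=0$, so $J_t<\infty$ almost surely. Hence $J_t$ is conditionally geometric on $\{1,2,\ldots\}$, and the tail-sum formula gives
\[
\mathbb E[J_t\mid\mathcal I_t]=\sum_{b\ge0}(1-p_t^{\mathrm{cert}})^b=(p_t^{\mathrm{cert}})^{-1}.
\]

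\textbf{Law of the accepted draw.} Fix a measurable collection $D$ and decompose over the value of $J_t$:
\[
\mathbb P(\widetilde H_{t,J_t}\in D\mid\mathcal I_t)=\sum_{j\ge1}\mathbb P\Bigl(\textstyle\bigcap_{i<j}\{\widetilde H_{t,i}\notin\mathcal R_t^{\mathrm{cert}}\}\cap\{\widetilde H_{t,j}\in D\cap\mathcal R_t^{\mathrm{cert}}\}\Bigm|\mathcal I_t\Bigr).
\]
Conditional independence factorizes each summand into $(1-p_t^{\mathrm{cert}})^{j-1}\,Q_t(D\cap\mathcal R_t^{\mathrm{cert}}\mid\mathcal I_t)$. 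Summing the geometric series gives $Q_t(D\cap\mathcal R_t^{\mathrm{cert}}\mid\mathcal I_t)/p_t^{\mathrm{cert}}=Q_t^{\mathrm{cert}}(D\mid\mathcal I_t)$, which is the claimed law.

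\textbf{Main obstacle.} The only delicate point is interpretive: making sure the within-round draws truly come from a fixed kernel given $\mathcal I_t$. The expansion algorithm conditions the generator on $\mathcal H_{\le t-1}$, not on the draws made so far in round $t$. That is why the i.i.d.\ hypothesis is stated explicitly rather than derived. Once that hypothesis is accepted, everything above is the textbook rejection-sampling computation. The budget cap $B$ in the expansion algorithm only truncates $J_t$, so the tail formula at $b=B$ is the probability that a round ends with no certified draw.
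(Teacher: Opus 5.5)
Your proposal is correct and follows the paper's argument: conditionally on $\mathcal I_t$, the acceptance indicators are i.i.d.\ Bernoulli with parameter $p_t^{\mathrm{cert}}$, so $J_t$ is geometric and the accepted draw has the kernel $Q_t$ conditioned on $\mathcal R_t^{\mathrm{cert}}$. You spell out the tail formula, the tail-sum expectation and the geometric-series decomposition over $J_t$, which the paper's three-line proof leaves implicit.
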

\begin{proof}
Conditionally on the fixed history, the acceptance indicators are i.i.d.
Bernoulli variables. The waiting time is geometric, and the accepted draw
has the kernel conditioned on acceptance.
\end{proof}
The operational algorithm caps the attempts at $B_t$. Uncertified proper
proposals meeting $H\cap V_t\ne\varnothing$ are retained as open claims.
On $\mathcal E_\delta$, no true candidate is discarded as contradicted.

\subsection{Formal generator coverage}
For a fixed accuracy $\varepsilon$, define the readiness event
\begin{equation}
W_{t,\varepsilon}:=
\{\mathcal R_t^{\mathrm{cert}}\cap\mathcal H_\varepsilon^\star
\ne\varnothing\}.
\label{app:eq:readiness-event}
\end{equation}
Let $\mathsf C_t^-$ be the previous archive reclassified with $V_t$
before expansion, and let $Z_{t,\varepsilon}=1$ if the actual budgeted
expansion produces a certified member of $\mathcal H_\varepsilon^\star$.

\begin{assumption}[Effective budgeted generator coverage]
\label{ass:generator-coverage}
For the fixed $\varepsilon$, there is $q_\varepsilon\in(0,1]$ such that
\[
\mathbb P(Z_{t,\varepsilon}=1\mid\mathcal I_t)\ge q_\varepsilon
\]
almost surely on
$W_{t,\varepsilon}\cap
\{\mathsf C_t^-\cap\mathcal H_\varepsilon^\star=\varnothing\}$,
for every expansion round of the continuation.
\end{assumption}
This condition need not hold for an unrestricted free-form generator.
It is an explicit assumption on the implemented expansion step.

For conditionally i.i.d. proposals stopped at the first certified draw,
let
\[
q_{t,\varepsilon}^{\mathrm{cond}}
:=\frac{Q_t(\mathcal R_t^{\mathrm{cert}}\cap
\mathcal H_\varepsilon^\star\mid\mathcal I_t)}{p_t^{\mathrm{cert}}}
\quad\text{when }p_t^{\mathrm{cert}}>0.
\]
Then
\[
\mathbb P(Z_{t,\varepsilon}=1\mid\mathcal I_t)
=\left[1-(1-p_t^{\mathrm{cert}})^{B_t}\right]
q_{t,\varepsilon}^{\mathrm{cond}}.
\]
If $p_t^{\mathrm{cert}}=0$, the hit probability is zero.
The assumption lower-bounds the entire expression, not just its second
factor. Every raw attempt, including invalid proposals, duplicates, and
any enumeration calls, counts toward $B_t$. Deterministic enumeration
or changing the kernel within the round requires analysis of that actual
procedure; the i.i.d. equality must not be applied to it automatically.

\subsection{Implementation}
Algorithm~\ref{alg:expand} uses rejection sampling for concreteness. It returns the retained archive after at most $B$ attempts, whether or not a new certified claim was found. Language-model proposal mechanisms may be used within the registered semantic interface. 
Such implementation can be implemented by rejection sampling~\citep{novikov2025alphaevolve, lange2026shinkaevolve, adachi2026regret}.
Their implementation does not by itself establish Assumption~\ref{ass:generator-coverage}.

\section{Possibility Frontier Search as a Fractional Design}
\label{app:frontier-lp}
Suppose $\mathcal A=\{A_1,\ldots,A_m\}$ and the pair set is
$\mathcal P=\{(\theta_i,\vartheta_i)\}_{i=1}^n$. Set
$D_{ij}=d_{A_j}(\theta_i,\vartheta_i)$. The maximin problem
\[
\max_{\rho\in\Delta(\mathcal A)}
\min_i\sum_j\rho(A_j)D_{ij}
\]
is equivalent to the LP in Eq.~\eqref{eq:lp}. Its output is a probability vector
$\rho^\star=(\rho_1^\star,\ldots,\rho_m^\star)$ and an attained separation value $\Gamma^\star$. If $A\sim\rho^\star$, then for every constrained pair
\[
\mathbb E[d_A(\theta_i,\vartheta_i)]
=
\sum_j\rho_j^\star D_{ij}
\geq\Gamma^\star.
\]
This is why random sampling implements the fractional design. A deterministic alternative is to track cumulative desired allocations: with
$N_t(A)=\sum_{s=1}^t\mathbf 1\{A_s=A\}$ and
$W_{t+1}(A)=\sum_{s=0}^t\rho_{s+1}(A)$, select
\begin{equation}
\label{app:eq:deterministic-tracking}
A_{t+1}\in
\operatorname*{arg\,max}_{A\in\mathcal A}
\{W_{t+1}(A)-N_t(A)\}.
\end{equation}
The randomized implementation gives the cleanest one-step expectation identity; deterministic tracking gives approximately the same cumulative allocation.

\subsection{Separation delivered by the selected pair set}
Define the global all-pairs value
\begin{equation}
\label{app:eq:global-separation}
\Gamma_{\rm all}
:=
\max_{\rho\in\Delta(\mathcal A)}
\min_{\theta\neq\vartheta}
\sum_A\rho(A)d_A(\theta,\vartheta).
\end{equation}

\begin{lemma}[Uniform positive all-pairs separation]
\label{lem:uniform-positive-separation}
If $\Theta$ and $\mathcal A$ are finite, $|\Theta|\ge2$, $|\mathcal A|\ge1$, and
Assumption~\ref{ass:falsifiability} holds, then
$\Gamma_{\rm all}>0$.
\end{lemma}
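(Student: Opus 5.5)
The plan is to exhibit an explicit design, namely the uniform allocation $\rho_{\rm u}(A)=1/|\mathcal A|$, and to show that it already gives a strictly positive worst-case separation. Because $\Gamma_{\rm all}$ is a maximum over $\Delta(\mathcal A)$, it is at least the value at $\rho_{\rm u}$.

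First, I will check that the maximum in Eq.~\eqref{app:eq:global-separation} is well defined. Since $\mathcal A$ is finite, $\Delta(\mathcal A)$ is a compact simplex. Since $\Theta$ is finite and $|\Theta|\ge2$, the set of ordered pairs $\{(\theta,\vartheta):\theta\ne\vartheta\}$ is finite and nonempty. Each map $\rho\mapsto\sum_A\rho(A)d_A(\theta,\vartheta)$ is linear, so the inner minimum over finitely many such maps is continuous (indeed concave). Hence the outer maximum is attained, and
\[
\Gamma_{\rm all}\ \ge\ \min_{\theta\ne\vartheta}\frac{1}{|\mathcal A|}\sum_{A\in\mathcal A}d_A(\theta,\vartheta).
\]

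Second, I will bound each pair term from below. The discrepancy $d_A$ is nonnegative, for example KL or the divergence induced by a strictly proper scoring rule, as stated after Assumption~\ref{ass:falsifiability}. Therefore $\sum_A d_A(\theta,\vartheta)\ge d_{A_{\theta\vartheta}}(\theta,\vartheta)$ for the action $A_{\theta\vartheta}$ supplied by Assumption~\ref{ass:falsifiability}, and that term is strictly positive. So every pair has strictly positive uniform-design separation. A minimum of finitely many strictly positive numbers is strictly positive, which gives
\[
\Gamma_{\rm all}\ \ge\ \frac{1}{|\mathcal A|}\min_{\theta\ne\vartheta}d_{A_{\theta\vartheta}}(\theta,\vartheta)>0.
\]

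The only delicate point is the use of nonnegativity of $d_A$ for the actions that do not separate a given pair. Assumption~\ref{ass:falsifiability} itself only guarantees one positive term, so nonnegativity must be cited from the setup. Finiteness of $\Theta$ is also essential: it is what prevents the infimum over pairs from degenerating to zero. Without it, pairwise positivity alone would not suffice.
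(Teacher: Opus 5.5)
Your proposal is correct and matches the paper's proof: you evaluate the uniform allocation, use nonnegativity of $d_A$ together with the one positive action from Assumption~\ref{ass:falsifiability} to get a strictly positive average for every distinct pair, take the minimum over finitely many pairs, and note that the LP value is at least this uniform-design value. Your extra compactness argument for attainment of the maximum is harmless but not needed for positivity.
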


\begin{proof}
Under the uniform allocation $\rho_{\rm unif}(A)=1/|\mathcal A|$, every
distinct pair has strictly positive average discrepancy because at least one
action has positive discrepancy and all terms are nonnegative.  There are
finitely many pairs, so the minimum of these positive averages is positive.
The optimal all-pairs LP value is at least this uniform-design value.
\end{proof}

For the pair set $\mathcal P_t$ actually selected by
Eq.~\eqref{eq:cut-directed-design}, let
\begin{equation}
\label{app:eq:attained-frontier-value}
\Gamma_t^\star
:=
\min_{(\theta,\vartheta)\in\mathcal P_t}
\sum_{A\in\mathcal A}
\rho_{t+1}(A)d_A(\theta,\vartheta).
\end{equation}
Whenever $\mathcal P_t\neq\varnothing$, LP optimality gives
\begin{equation}
\label{app:eq:selected-pair-separation}
\sum_{A\in\mathcal A}
\rho_{t+1}(A)d_A(\theta,\vartheta)
\geq\Gamma_t^\star,
\qquad
(\theta,\vartheta)\in\mathcal P_t.
\end{equation}
If the possibility frontier is empty and $|V_t|>1$, then
$\mathcal P_t=\{(\theta,\vartheta)\in V_t^2:\theta\ne\vartheta\}$, so
$\Gamma_t^\star\geq\Gamma_{\rm all}>0$.  When the frontier is nonempty,
Eq.~\eqref{app:eq:selected-pair-separation} controls only pairs included in the
frontier.  In particular, without a separate exploration mixture, periodic
all-pairs step, or non-starvation condition, falsifiability alone does not
imply positive information for a pair excluded from $\mathcal P_t$.

\begin{proposition}[Frontier-design optimality]
\label{prop:frontier-design-optimality}
Fix a round $t$ and condition on the history immediately before choosing
$A_{t+1}$, so that the finite pair set $\mathcal P_t\neq\varnothing$ is fixed. Define
\begin{equation}
\label{eq:conditional-frontier-value}
\Gamma_t(\rho)
:=
\min_{(\theta,\vartheta)\in \mathcal P_t}
\sum_{A\in\mathcal A}\rho(A)d_A(\theta,\vartheta),
\qquad \rho\in\Delta(\mathcal A).
\end{equation}
If $\rho_{t+1}$ solves Eq.~\eqref{eq:cut-directed-design}, then
\begin{equation}
\label{eq:conditional-frontier-optimality}
\Gamma_t(\widetilde\rho)
\le
\Gamma_t(\rho_{t+1})
=:
\Gamma_t^\star
\qquad
\text{for every }\widetilde\rho\in\Delta(\mathcal A).
\end{equation}
Equivalently, among all randomized one-step experimental designs,
$\rho_{t+1}$ maximizes the worst-pair expected discrepancy over the currently
selected frontier pairs:
\begin{equation}
\min_{(\theta,\vartheta)\in \mathcal P_t}
\mathbb E_{A\sim\widetilde\rho}
\!\left[d_A(\theta,\vartheta)\right]
\le
\min_{(\theta,\vartheta)\in \mathcal P_t}
\mathbb E_{A\sim\rho_{t+1}}
\!\left[d_A(\theta,\vartheta)\right].
\end{equation}
\end{proposition}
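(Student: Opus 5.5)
This is essentially a restatement of the definition of $\rho_{t+1}$ as a maximizer in Eq.~\eqref{eq:cut-directed-design}, so the plan is to verify that the objective there coincides with $\Gamma_t(\rho)$ and then translate it into expectation form.

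\textbf{Step 1: well-definedness.} Conditioning on the history immediately before choosing $A_{t+1}$ fixes the finite pair set $\mathcal P_t$ and the finite action set $\mathcal A$, so $\Gamma_t$ is a deterministic function on $\Delta(\mathcal A)$.

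\textbf{Step 2: existence of a maximizer.} For each pair, $\rho\mapsto\sum_A\rho(A)d_A(\theta,\vartheta)$ is linear in $\rho$. Hence $\Gamma_t$ is a minimum of finitely many linear functions, and is therefore continuous and concave. The simplex $\Delta(\mathcal A)$ is compact, so the argmax in Eq.~\eqref{eq:cut-directed-design} is nonempty; equivalently, it is attained by the LP of Appendix~\ref{app:frontier-lp}. This matters only to ensure $\rho_{t+1}$ exists. The proposition assumes it solves Eq.~\eqref{eq:cut-directed-design}.

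\textbf{Step 3: the inequality.} The objective in Eq.~\eqref{eq:cut-directed-design} is exactly $\Gamma_t(\rho)$. So $\rho_{t+1}\in\arg\max_{\rho}\Gamma_t(\rho)$ gives $\Gamma_t(\widetilde\rho)\le\Gamma_t(\rho_{t+1})$ for every $\widetilde\rho\in\Delta(\mathcal A)$. This is Eq.~\eqref{eq:conditional-frontier-optimality}, with $\Gamma_t^\star$ matching Eq.~\eqref{app:eq:attained-frontier-value}.

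\textbf{Step 4: expectation form.} For any design $\rho$ and $A\sim\rho$, drawn independently of everything else given the history, we have $\mathbb E_{A\sim\rho}[d_A(\theta,\vartheta)]=\sum_A\rho(A)d_A(\theta,\vartheta)$. The minimum over pairs of these expectations is therefore $\Gamma_t(\rho)$, and the displayed expectation inequality follows from Step 3.

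There is no real obstacle. The only care point is measurability and conditioning: $\mathcal P_t$ and $\rho_{t+1}$ must be functions of the pre-decision history, and the randomization $A\sim\rho$ must not depend on future data. Both hold by construction of Alg.~\ref{alg:possibility-frontier-search}.
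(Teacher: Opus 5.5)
Your proposal is correct and follows essentially the same route as the paper: identify the objective in Eq.~\eqref{eq:cut-directed-design} with $\Gamma_t(\rho)$, read off the inequality from the definition of a maximizer, and obtain the expectation form from $\mathbb E_{A\sim\rho}[d_A(\theta,\vartheta)]=\sum_A\rho(A)d_A(\theta,\vartheta)$. Your Step~2 is not needed, since the proposition already takes $\rho_{t+1}$ to be a solution; it also tacitly assumes a finite $\mathcal A$ and finite $d_A$, which the statement does not state, but dropping it leaves the argument intact.
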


\begin{proof}
Conditional on the history, Eq.~\eqref{eq:cut-directed-design} is exactly the
optimization problem
$\max_{\rho\in\Delta(\mathcal A)}\Gamma_t(\rho)$.
Therefore any optimizer $\rho_{t+1}$ satisfies
$\Gamma_t(\widetilde\rho)\le\Gamma_t(\rho_{t+1})$ for every feasible
$\widetilde\rho$. The expectation form follows from
\(
\mathbb E_{A\sim\rho}[d_A(\theta,\vartheta)]
=\sum_A\rho(A)d_A(\theta,\vartheta)
\).
\end{proof}

\subsection{Scope of the maximin optimality statement}
\label{app:scope-optimality}

\paragraph{Relation to classical active testing.}
For a fixed finite hypothesis family, Chernoff's sequential design allocates
experiments through a max--min Kullback--Leibler information criterion and is
asymptotically optimal under its stated regularity conditions
\citep{chernoff1959sequential}. Related active-testing analyses establish
lower bounds in terms of the largest achievable worst-alternative information
rate and construct policies attaining those bounds asymptotically
\citep{naghshvar2013active}. In fixed-confidence best-arm identification,
Track-and-Stop likewise derives an oracle max--min allocation from a tight
sample-complexity lower bound and tracks that allocation, yielding asymptotic
optimality as the confidence level tends to one
\citep{garivier2016optimal}. Nonasymptotic bounds for Chernoff-style active
testing have also been developed by \citet{mukherjee2022chernoff}.

Possibility Frontier Search shares the central experimental-design structure
of these methods: Eq.~\eqref{eq:cut-directed-design} maximizes the minimum
information proxy over a set of currently relevant alternatives. Here those
alternatives are not a fixed hypothesis family, but the data-dependent
possibility frontier $\mathcal P_t$. Proposition~\ref{prop:frontier-design-optimality}
therefore gives an exact finite-round statement: the selected allocation is
maximin optimal for the current frontier criterion. If $d_A$ is the directed
KL divergence, $\Gamma_t^\star$ is the largest worst-pair one-step KL rate
obtainable by a randomized design over $\mathcal A$.

The allocation is maximin-optimal for the current pair set.
Recovery of the complete procedure additionally depends on hypothesis
generation and information acquired across rounds, as captured by
Theorem~\ref{thm:generic-approximate-recovery}.

\subsection{Implementation.}
For a finite pair set $\mathcal P_t=\{(\theta_i,\vartheta_i)\}_{i=1}^n$ and action set $\mathcal A=\{A_j\}_{j=1}^m$, write $D_{ij}:=d_{A_j}(\theta_i,\vartheta_i)$. Prob.~\eqref{eq:cut-directed-design} can be solved by Linear Programming (LP).
\begin{equation}
\begin{aligned}
\label{eq:lp}
&\max_{\rho_1,\cdots,\rho_m,\Gamma} \Gamma\\
\text{s.t.}\quad
     &\sum_{j=1}^m D_{ij} \rho_j \geq \Gamma,\quad \sum_{j=1}^m \rho_j = 1, \quad \rho_j \geq 0.
\end{aligned}
\end{equation}
The solver outputs the numerical allocation
$\rho^\star=(\rho_1^\star,\ldots,\rho_m^\star)$ and its attained worst-pair separation $\Gamma^\star$. Then we draw next action $A_{t+1}$ from the categorical distribution with probabilities $\rho_{t+1}(A_1),\ldots,\rho_{t+1}(A_m)$. (Line~\ref{alg_line:experiment} in Alg.~\ref{alg:possibility-frontier-search}).

\section{Likelihood-Ratio Evidence and Experimental Readiness}
\label{app:likelihood-ratio-rate}
\subsection{Full-support likelihood-ratio mixture}
Assume finite $\Theta$ and $\mathcal A$, with predictive densities
$p_\theta(\cdot\mid A)$ under a common dominating measure. After the
post-expansion history $\mathcal I_{t-1}^+$, action $A_t$ is sampled from
$\rho_t$ and the conditional observation law is
$p_\theta(\cdot\mid A_t)$ under state $\theta$. For each tested state
$\vartheta$, fix weights $w_\vartheta\in\Delta(\Theta)$ such that
\begin{equation}
w_{\min}:=\min_{\theta,\vartheta}w_\vartheta(\theta)>0.
\label{app:eq:full-support-weights}
\end{equation}
Define
\begin{equation}
E_t(\vartheta):=
\sum_{\theta\in\Theta}w_\vartheta(\theta)
\prod_{s=1}^{t}\frac{p_\theta(M_s\mid A_s)}{p_\vartheta(M_s\mid A_s)}.
\label{app:eq:likelihood-ratio-eprocess}
\end{equation}
Ratios are evaluated where the tested density is positive; behavior on
null sets under that tested law can be fixed by an extended-value convention.

\begin{proposition}[Validity under adaptive experiments]
\label{prop:lr-mixture-validity}
Under $\mathbb P_\vartheta$, $E_t(\vartheta)$ is a nonnegative
supermartingale and hence valid statewise evidence. If every alternative
law is absolutely continuous with respect to the tested law for every
action, it is a martingale.
\end{proposition}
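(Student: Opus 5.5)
The argument is a one-step supermartingale computation for each fixed mixture component, followed by linearity over the finite mixture. Fix the tested state $\vartheta$ and an alternative $\theta\in\Theta$. Write the likelihood-ratio product as
\[
L_t^\theta:=\prod_{s=1}^t \frac{p_\theta(M_s\mid A_s)}{p_\vartheta(M_s\mid A_s)},
\]
so that $E_t(\vartheta)=\sum_\theta w_\vartheta(\theta)L_t^\theta$. Nonnegativity is immediate because the densities are nonnegative, the weights are nonnegative, and the extended-value convention handles points where the tested density vanishes. For the filtration I will use the post-expansion histories $\mathcal I_{t-1}^+$. The quantity $L_{t-1}^\theta$ is $\mathcal I_{t-1}^+$-measurable, and $E_0\equiv 1$ because $\sum_\theta w_\vartheta(\theta)=1$.

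\textbf{One-step bound.} Condition on $\mathcal I_{t-1}^+$. The design $\rho_t$ is then fixed, since it is a measurable function of the history, and $A_t\sim\rho_t$. Conditionally on $A_t=A$, the observation $M_t$ has density $p_\vartheta(\cdot\mid A)$ under $\mathbb P_\vartheta$. This gives
\[
\mathbb E_\vartheta\!\left[\frac{p_\theta(M_t\mid A_t)}{p_\vartheta(M_t\mid A_t)}\,\Big|\,\mathcal I_{t-1}^+,A_t=A\right]
=\int_{\{p_\vartheta(\cdot\mid A)>0\}} p_\theta(m\mid A)\,dm\le 1 .
\]
Equality holds exactly when $p_\theta(\cdot\mid A)$ puts no mass where $p_\vartheta(\cdot\mid A)=0$, that is, under absolute continuity. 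Averaging over $A\sim\rho_t$ keeps the bound. Pulling out the measurable factor $L_{t-1}^\theta$ then yields $\mathbb E_\vartheta[L_t^\theta\mid\mathcal I_{t-1}^+]\le L_{t-1}^\theta$.

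\textbf{Adaptivity.} The main subtlety is that the action and the hypothesis-generation steps depend on the data. What makes this harmless is the tower property applied over $\mathcal I_{t-1}\subseteq\mathcal I_{t-1}^+\subseteq\mathcal I_t$. Expansion (including the LLM proposals) only adds information that does not change the conditional law of $M_t$ given $A_t$ under $\mathbb P_\vartheta$. I will state this explicitly as the conditional observation model assumed in the setup. Any extra randomization, such as generator draws or the sampling of $A_t$, must be conditionally independent of $M_t$ given $(A_t,\text{history})$. With that in place, the one-step bound holds with respect to the full filtration.

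\textbf{Conclusion.} Summing the one-step bound over the finitely many $\theta$ with weights $w_\vartheta(\theta)$ gives $\mathbb E_\vartheta[E_t(\vartheta)\mid\mathcal I_{t-1}^+]\le E_{t-1}(\vartheta)$. So $E_t(\vartheta)$ is a nonnegative supermartingale with $E_0=1$, and it is a martingale under absolute continuity. Ville's inequality then gives $\mathbb P_\vartheta(\sup_t E_t(\vartheta)\ge 1/\gamma)\le\gamma$, which is exactly the statewise validity in Eq.~\eqref{app:eq:statewise-validity}. Note that $w_{\min}>0$ is not needed for validity; it is used only later for rates.
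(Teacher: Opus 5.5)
Your proposal is correct and follows the paper's proof essentially step for step. Both condition on $\mathcal I_{t-1}^+$ and $A_t$ to obtain $\int_{\{p_\vartheta>0\}}p_\theta(m\mid A_t)\,dm\le 1$, with equality under absolute continuity, then multiply to get a supermartingale for each alternative and close under the fixed convex combination; your remarks on the conditional observation model under adaptive expansion and on Ville's inequality only spell out what the paper leaves implicit.
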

\begin{proof}
Conditionally on the history and selected action,
\[
\mathbb E_\vartheta\!\left[
\frac{p_\theta(M_t\mid A_t)}{p_\vartheta(M_t\mid A_t)}
\,\middle|\,\mathcal I_{t-1}^+,A_t\right]
=\int_{\{p_\vartheta>0\}}p_\theta(m\mid A_t)\,dm\le1.
\]
Equality holds under the stated absolute-continuity condition.
Multiplication gives a nonnegative supermartingale for each alternative;
a fixed convex combination has the same property. Conditioning again over
the adaptive action does not change the inequality.
\end{proof}

For a false state $\vartheta$, define
\begin{equation}
L_t(\theta^\star,\vartheta):=
\sum_{s=1}^t\log\frac{p_{\theta^\star}(M_s\mid A_s)}
{p_\vartheta(M_s\mid A_s)}.
\label{app:eq:true-false-llr}
\end{equation}
The true-state term gives
$E_t(\vartheta)\ge w_\vartheta(\theta^\star)e^{L_t(\theta^\star,\vartheta)}$.
Since $\vartheta\in V_t$ implies $E_t(\vartheta)<1/\delta$,
\begin{equation}
\vartheta\in V_t\ \Longrightarrow\
L_t(\theta^\star,\vartheta)<b_\delta,
\qquad b_\delta:=\log\frac1{w_{\min}\delta}.
\label{app:eq:version-space-llr-threshold}
\end{equation}

\subsection{Sub-Gaussian likelihood-ratio rate model}
For the exponential rate, assume finite directed KL divergences and a
uniform conditional sub-Gaussian proxy for log-likelihood-ratio increments
centered by their conditional means given $\mathcal I_{s-1}^+$.
Specifically, writing $X_s(\vartheta)$ for the increment of
$L_s(\theta^\star,\vartheta)$ and
$\mu_s(\vartheta)=\mathbb E_{\theta^\star}[X_s(\vartheta)\mid\mathcal I_{s-1}^+]$,
assume a finite $\sigma^2>0$ such that
\begin{equation}
\mathbb E_{\theta^\star}\!\left[
 e^{a(X_s(\vartheta)-\mu_s(\vartheta))}
 \mid\mathcal I_{s-1}^+\right]
\le e^{a^2\sigma^2/2},\qquad a\in\mathbb R.
\label{app:eq:centered-one-step-llr}
\end{equation}
Take $d_A$ to be directed KL divergence or a nonnegative lower bound.
Then
\[
\mu_s(\vartheta)=\sum_A\rho_s(A)
D_{\mathrm{KL}}(\mathbb P_{\theta^\star}^A\Vert\mathbb P_\vartheta^A)
\ge\sum_A\rho_s(A)d_A(\theta^\star,\vartheta).
\]
Finite action randomization is included in the conditional proxy.
A uniform per-action proxy together with bounded per-action KL means
on the finite spaces gives such a proxy.

For the equal-variance Gaussian observation model
$M=f_\theta(A)+\xi$, $\xi\sim\mathcal N(0,\sigma_E^2)$,
\begin{equation}
\log\frac{p_\theta(M\mid A)}{p_\vartheta(M\mid A)}
=\frac{(f_\theta(A)-f_\vartheta(A))^2}{2\sigma_E^2}
+\frac{f_\theta(A)-f_\vartheta(A)}{\sigma_E^2}\xi.
\label{app:eq:gaussian-llr}
\end{equation}
The per-action centered part is Gaussian and the finite model admits a
uniform proxy, including randomization over actions.

\subsection{Readiness under the actual frontier policy}
Define
\begin{equation}
I_t^\rho(\vartheta):=
\sum_{s=0}^{t-1}\sum_A\rho_{s+1}(A)d_A(\theta^\star,\vartheta).
\label{app:eq:actual-cumulative-information}
\end{equation}
For $\Gamma_s^\star$ in Appendix~\ref{app:frontier-lp}, with value zero
when $\mathcal P_s=\varnothing$, the selected-pair guarantee implies
\begin{equation}
I_t^\rho(\vartheta)\ge
\sum_{s=0}^{t-1}\mathbf1\{(\theta^\star,\vartheta)\in\mathcal P_s\}
\Gamma_s^\star.
\label{app:eq:lp-information-lower-bound}
\end{equation}
It provides no guaranteed contribution from a round in which the relevant
ordered pair is omitted.

Fix $\varepsilon>0$ and a target
$H_\varepsilon\in\mathcal H_\varepsilon^\star$. The non-starvation condition
is that a constant $\Gamma_\varepsilon>0$ satisfies, almost surely on
$\mathcal E_\delta$, for every $t\ge1$ and each
$\vartheta\in(\Theta\setminus H_\varepsilon)\cap V_t$,
\begin{equation}
I_t^\rho(\vartheta)\ge t\Gamma_\varepsilon.
\label{app:eq:frontier-information-rate}
\end{equation}
A sufficient condition is the corresponding linear lower bound in
Eq.~\eqref{app:eq:lp-information-lower-bound} at every required time.
Positive asymptotic pair frequency alone gives an eventual bound and does
not establish this all-time inequality without a separately quantified
initial period. Falsifiability alone does not imply the condition for the
frontier-only policy.

For this fixed target, write $\Gamma=\Gamma_\varepsilon$,
$\sigma=\sigma_\varepsilon>0$ for a valid uniform proxy, and
$m=m_\varepsilon:=|\Theta\setminus H_\varepsilon|$.
On $\mathcal E_\delta\cap\{\vartheta\in V_t\}$, the cumulative conditional
mean is at least $t\Gamma$, while $L_t<b_\delta$. Therefore the centered
sum is less than $b_\delta-t\Gamma$. Iterating
Eq.~\eqref{app:eq:centered-one-step-llr} and using its lower-tail bound gives
\begin{equation}
\mathbb P_{\theta^\star}(\mathcal E_\delta\cap\{\vartheta\in V_t\})
\le\exp\!\left\{-\frac{(t\Gamma-b_\delta)_+^2}{2t\sigma^2}\right\}.
\label{app:eq:false-state-survival}
\end{equation}
This uses event inclusion, not a concentration assertion conditional on
$\mathcal E_\delta$. A union bound over the $m$ states outside the target
now gives
\begin{equation}
\mathbb P_{\theta^\star}
(\mathcal E_\delta\cap\{V_t\nsubseteq H_\varepsilon\})
\le R_t(\varepsilon),
\label{app:eq:readiness-rate}
\end{equation}
where $R_0(\varepsilon)=1$ and
\begin{equation}
R_t(\varepsilon)=1\wedge m
\exp\!\left\{-\frac{(t\Gamma-b_\delta)_+^2}{2t\sigma^2}\right\},
\quad t\ge1.
\label{app:eq:readiness-envelope}
\end{equation}
The quantity $m$ counts states outside a chosen target, not the number of
near-optimal hypotheses.

\section{Generator-Relative Recovery}
\label{app:recovery-details}
\subsection{Reachable class and target}
Analyze the generator and policy with the stopping check and finite
observation cap disabled; its per-round proposal budgets remain specified.
The stopped implementation agrees with this continuation until termination.
For a discrete semantic representation, define
\begin{equation}
\mathcal H_\infty:=\mathcal H_0\cup
\{H:\mathbb P_{\theta^\star}^{\mathsf{Gen},\rho}
(\exists t\ge0,\,1\le j\le B_t:
\text{attempt }j\text{ is reached and }\widetilde H_{t,j}=H)>0\}.
\label{app:eq:reachable-class}
\end{equation}
Only valid nonempty proper claims enter this class. In a finite or countable
representation, all realized claims belong to this class almost surely.
For a more general representation, specify a measurable comparison class
containing all proposals almost surely; positive point mass is not an
appropriate reachability definition for an atomless generator. All
optimizations and events must be measurable. The finite-state recovery
analysis uses Eq.~\eqref{app:eq:reachable-class}.

Fix $\varepsilon$ and recall
$\mathcal H_\varepsilon^\star$ and $W_{t,\varepsilon}$ from
Eqs.~\eqref{eq:epsilon-optimal-generator-set},~\eqref{app:eq:readiness-event}.
On $\mathcal E_\delta$, readiness persists because the version space
shrinks while retaining $\theta^\star$. It occurs whenever
$V_t\subseteq H_\varepsilon$, so
\begin{equation}
\mathbb P_{\theta^\star}
(\mathcal E_\delta\cap W_{t,\varepsilon}^c)
\le R_t(\varepsilon).
\label{app:eq:readiness-error}
\end{equation}

\subsection{Proof of Theorem~\ref{thm:generic-approximate-recovery}}
Fix $0\le s\le t$ and write $q=q_\varepsilon$. Define the post-expansion
and pre-expansion failure events
\[
N_r:=\{\mathsf C_r\cap\mathcal H_\varepsilon^\star=\varnothing\},
\qquad
N_r^-:=\{\mathsf C_r^-\cap\mathcal H_\varepsilon^\star=\varnothing\},
\]
and the finite-history coverage event
\[
\mathcal E_{\delta,r}:=
\{\theta^\star\in V_j\text{ for }0\le j\le r\}\in\mathcal I_r.
\]
For $r\ge s$, let
\[
B_r:=\mathcal E_{\delta,r}\cap W_{s,\varepsilon}
\cap\bigcap_{j=s}^{r}N_j.
\]
For the base step, the pre-expansion event
$D_s:=\mathcal E_{\delta,s}\cap W_{s,\varepsilon}\cap N_s^-$
satisfies
$B_s\subseteq D_s\cap\{Z_{s,\varepsilon}=0\}$.
Indeed, a previously certified true target cannot disappear during
expansion. Assumption~\ref{ass:generator-coverage} therefore gives
$\mathbb P(B_s)\le1-q$.

For $r>s$, define
\[
D_r:=B_{r-1}\cap\mathcal E_{\delta,r}\cap N_r^-.
\]
It is $\mathcal I_r$-measurable and implies $W_{r,\varepsilon}$: a target
certifiable at $s$ is still certifiable after shrinking the nonempty
version space. Since
$B_r\subseteq D_r\cap\{Z_{r,\varepsilon}=0\}$,
\[
\mathbb P(B_r)
\le(1-q)\mathbb P(D_r)
\le(1-q)\mathbb P(B_{r-1}).
\]
Iteration yields $\mathbb P(B_t)\le(1-q)^{t-s+1}$.
On $\mathcal E_\delta$, a certified target is retained forever; hence
$\mathcal E_\delta\cap W_{s,\varepsilon}\cap N_t\subseteq B_t$ and
\begin{equation}
\mathbb P_{\theta^\star}
(\mathcal E_\delta\cap W_{s,\varepsilon}\cap N_t)
\le(1-q_\varepsilon)^{t-s+1}.
\label{app:eq:geometric-generation-error}
\end{equation}
This is a joint-event bound. The proposal kernel is never conditioned on
the future-dependent event $\mathcal E_\delta$.

Combining this bound with Eq.~\eqref{app:eq:readiness-error} gives
\begin{equation}
\mathbb P_{\theta^\star}(\mathcal E_\delta\cap N_t)
\le\inf_{0\le s\le t}
\{R_s(\varepsilon)+(1-q_\varepsilon)^{t-s+1}\}
=\mathcal B_t(\varepsilon).
\label{app:eq:joint-recovery-error}
\end{equation}
Adding $\mathbb P(\mathcal E_\delta^c)\le\delta$ proves the actual-target
statement in the theorem. The event $\{U_t<U^\star-\varepsilon\}$ is
contained in $N_t$, proving the scalar statement as well.

\subsection{Asymptotic consequence}
For $s=\lfloor t/2\rfloor$ and sufficiently large $t$,
\[
R_s(\varepsilon)
\le m_\varepsilon e^{-s\Gamma_\varepsilon^2/(8\sigma_\varepsilon^2)},
\qquad
(1-q_\varepsilon)^{t-s+1}
\le e^{-q_\varepsilon(t-s+1)}.
\]
Thus $\mathcal B_t(\varepsilon)\to0$. On $\mathcal E_\delta$, a certified
target never disappears, so $\mathcal E_\delta\cap N_t$ is a decreasing
sequence of events. By Eq.~\eqref{app:eq:joint-recovery-error}, its
intersection has probability zero. Hence a target is eventually obtained
and retained almost surely on $\mathcal E_\delta$, and
\begin{equation}
\mathbb P_{\theta^\star}\!\left(
\mathcal E_\delta\cap
\{\exists t_0<\infty:\forall t\ge t_0,
\mathsf C_t\cap\mathcal H_\varepsilon^\star\ne\varnothing\}\right)
\ge1-\delta.
\label{app:eq:eventual-recovery}
\end{equation}
The conclusion is pointwise in $\varepsilon$. It is not inferred from
an unconditional limiting failure probability of $\delta$ alone, and it
does not imply continued refinement after the actual procedure has stopped.

\subsection{Observable stopping guarantee}
\label{app:observable-abductive-stopping}
The stopping certificate uses the fixed contour and an observable upper
bound, rather than unknown generation or information-rate constants.

\begin{theorem}[Certified primary optimality at stopping]
\label{thm:observable-abductive-stopping}
Assume simultaneous version-space coverage, a fixed background contour,
a finite current archive contained almost surely in the comparison class,
and measurable set evaluations, scores, and the final tie-break.
The comparison class is defined with termination disabled as above.
Then, on $\mathcal E_\delta$, simultaneously for every round with
$\mathsf C_t\ne\varnothing$,
\[
\theta^\star\in\widehat H_t,
\qquad
0\le U^\star-U_t\le\overline U_t-U_t.
\]
The guarded stopping time in Eq.~\eqref{eq:abductive-stopping-time}
satisfies
\begin{equation}
\begin{split}
\mathbb P_{\theta^\star}\bigl(
&\tau_\varepsilon<\infty,\ 
[\theta^\star\notin\widehat H_{\tau_\varepsilon}\ \text{or}\\
&u(\widehat H_{\tau_\varepsilon})<U^\star-\varepsilon]
\bigr)\le\delta.
\end{split}
\label{app:eq:observable-stopping-guarantee}
\end{equation}
No generator-coverage or non-starvation constant is required for this
stopping guarantee.
\end{theorem}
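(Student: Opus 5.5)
The plan is to work entirely on the coverage event $\mathcal E_\delta$, which has probability at least $1-\delta$ by Proposition~\ref{prop:anytime-valid-version-space}. We show that on this event every stated inequality holds deterministically at every round with $\mathsf C_t\ne\varnothing$. The stopping bound then follows by event inclusion, with no union bound over rounds or claims.

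First, fix a round $t$ with $\mathsf C_t\ne\varnothing$.

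1. By definition $\widehat H_t\in\mathsf C_t$, so $V_t\subseteq\widehat H_t$. The certification implication of Proposition~\ref{prop:anytime-valid-version-space} then gives $\theta^\star\in\widehat H_t$. The same argument shows $\theta^\star\in H$ for every $H\in\mathsf C_t$.
2. Each such $H$ is a proper hypothesis. The archive is assumed to lie in the comparison class $\mathcal H_\infty$, defined with termination disabled. So every $H\in\mathsf C_t$ is feasible in the supremum defining $U^\star$.
3. Taking the supremum over the finite set $\mathsf C_t$ gives $U_t\le U^\star$, i.e.\ $U^\star-U_t\ge 0$.

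For the upper bound, take any feasible $H$ in the definition of $U^\star$. Since $\theta^\star\in H$, we have $u(H)=\inf_{\theta\in H}\pi^K(\theta)\le\pi^K(\theta^\star)$, so $U^\star\le\pi^K(\theta^\star)$. On $\mathcal E_\delta$ we also have $\theta^\star\in V_t$, hence $\pi^K(\theta^\star)\le\sup_{\theta\in V_t}\pi^K(\theta)=\overline U_t$. Combining, $0\le U^\star-U_t\le\overline U_t-U_t$. These steps use only the fixed contour and the round-$t$ quantities, so they hold simultaneously over all $t$ on $\mathcal E_\delta$.

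For the stopping statement, suppose $\tau_\varepsilon<\infty$ and $\mathcal E_\delta$ holds. At $t=\tau_\varepsilon$ we have $\mathsf C_t\ne\varnothing$ and $\overline U_t-U_t\le\varepsilon$.

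1. The first part gives $\theta^\star\in\widehat H_{\tau_\varepsilon}$.
2. By the selection rule~\eqref{eq:specificity-tiebreak}, $u(\widehat H_{\tau_\varepsilon})=U_{\tau_\varepsilon}$. Here the supremum is attained because $\mathsf C_t$ is finite.
3. Therefore $u(\widehat H_{\tau_\varepsilon})=U_{\tau_\varepsilon}\ge U^\star-(\overline U_{\tau_\varepsilon}-U_{\tau_\varepsilon})\ge U^\star-\varepsilon$.

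So the bad event in~\eqref{app:eq:observable-stopping-guarantee} is contained in $\mathcal E_\delta^c$, whose probability is at most $\delta$. The guard $\mathsf O_t^+=\varnothing$ is not needed for this bound, and neither is generator coverage or non-starvation.

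The main obstacle is step 2 of the first part: justifying that members of $\mathsf C_t$ are admissible competitors for $U^\star$. This requires the archive to lie in the reachable class $\mathcal H_\infty$, which is defined on the continuation with termination disabled. It also requires the stopped procedure to agree with that continuation up to $\tau_\varepsilon$, so that $\mathcal E_\delta$ and $\mathcal H_\infty$ refer to the same process. I would handle this by stating the coupling explicitly, using the hypothesis that the current archive lies almost surely in the comparison class. I would also note that a random stopping time causes no difficulty, since the inequalities hold pathwise for all $t$ on $\mathcal E_\delta$.
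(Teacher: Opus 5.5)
Your proposal is correct and follows the paper's proof essentially step for step. Both arguments work on $\mathcal E_\delta$ and use the certification implication plus the comparison-class assumption to get $\theta^\star\in\widehat H_t$ and $U_t\le U^\star$; they then bound $U^\star\le\pi^K(\theta^\star)\le\overline U_t$ via $\theta^\star\in V_t$, and conclude from the inclusion $\{\tau_\varepsilon<\infty,\ \text{failure}\}\subseteq\mathcal E_\delta^c$. You also spell out two points the paper leaves implicit: that $U_t$ is attained because $\mathsf C_t$ is finite, and that the inequalities hold pathwise for all $t$ on $\mathcal E_\delta$, whereas the paper instead notes that the stopping checks are measurable with respect to the post-expansion filtration.
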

\begin{proof}
Work on $\mathcal E_\delta$. Every certified claim contains
$\theta^\star$ and belongs to the comparison class; hence $U_t\le U^\star$
and the selected claim is true. Every true comparison claim satisfies
$u(H)\le\pi^K(\theta^\star)$. Since $\theta^\star\in V_t$,
\[
U_t\le U^\star\le\pi^K(\theta^\star)
\le\sup_{\theta\in V_t}\pi^K(\theta)=\overline U_t.
\]
All stopping checks are measurable with respect to $\mathcal I_t^+$,
so the first-passage rule is a stopping time for the post-expansion
filtration. At termination on $\mathcal E_\delta$,
\[
U^\star-u(\widehat H_{\tau_\varepsilon})
=U^\star-U_{\tau_\varepsilon}
\le\overline U_{\tau_\varepsilon}-U_{\tau_\varepsilon}
\le\varepsilon.
\]
Thus the failure event is contained in $\mathcal E_\delta^c$.
The additional guard $\mathsf O_t^+=\varnothing$ only restricts when the
report occurs and does not change this inclusion.
\end{proof}

The bound is joint in termination and failure; it is not a conditional
error bound given termination. It certifies primary utility, not globally
minimal cardinality or optimal experimental cost. Termination need not
occur: on $\mathcal E_\delta$,
$\overline U_t-U_t\ge\pi^K(\theta^\star)-U^\star$, which can stay positive
for a restricted language. A previously ungenerated refinement may also
exist after the current active set is empty.

Numerical implementation must use exact extrema or conservative bounds:
an upper bound on the version-space supremum and a lower bound on the
selected certified claim's utility. If their difference is at most
$\varepsilon$, the same event-inclusion proof applies. Sampling a subset
of states does not in general give bounds in these required directions.

\subsection{Stopping and sample complexity with known constants}
If the recovery constants are known valid values, define
\begin{equation}
T_{\varepsilon,\zeta}:=
\inf\{t:\mathcal B_t(\varepsilon)\le\zeta\},
\label{app:eq:oracle-recovery-horizon}
\end{equation}
and the first reporting time after that horizon,
\begin{equation}
\tau_{\varepsilon,\zeta}:=
\inf\{t\ge T_{\varepsilon,\zeta}:\mathsf C_t\ne\varnothing\}.
\label{app:eq:oracle-reporting-time}
\end{equation}
On $\mathcal E_\delta$, a certified target at the deterministic horizon
persists. Using Eq.~\eqref{app:eq:joint-recovery-error} therefore gives
\begin{equation}
\mathbb P_{\theta^\star}\!\left(
\tau_{\varepsilon,\zeta}<\infty,\;
[\theta^\star\notin\widehat H_{\tau_{\varepsilon,\zeta}}
\ \text{or}\ 
 u(\widehat H_{\tau_{\varepsilon,\zeta}})<U^\star-\varepsilon]
\right)\le\delta+\zeta.
\label{app:eq:oracle-horizon-guarantee}
\end{equation}
This ex ante horizon is distinct from the observable stopping certificate;
it need not be computable for a free-form generator.

\begin{corollary}[Explicit experimental sample complexity]
\label{cor:explicit-sample-complexity}
Under Theorem~\ref{thm:generic-approximate-recovery}, fix $\varepsilon>0$
and $\zeta\in(0,1)$. In this corollary, abbreviate
$q=q_\varepsilon$, $\Gamma=\Gamma_\varepsilon$,
$\sigma=\sigma_\varepsilon$, and $m=m_\varepsilon\ge1$.
Define
\begin{align}
n_E&:=\left\lceil\max\left\{1,\frac{2b_\delta}{\Gamma},
\frac{8\sigma^2}{\Gamma^2}\log\frac{2m}{\zeta}\right\}\right\rceil,
\label{eq:explicit-readiness-horizon}\\
n_G&:=\begin{cases}
\left\lceil\dfrac{\log(2/\zeta)}{-\log(1-q)}\right\rceil,&q<1,\\
1,&q=1,
\end{cases}
\label{eq:explicit-generation-horizon}\\
\overline T_{\varepsilon,\zeta}&:=n_E+n_G-1.
\label{eq:explicit-total-horizon}
\end{align}
Then
\begin{equation}
\mathbb P_{\theta^\star}
(\mathsf C_{\overline T_{\varepsilon,\zeta}}
\cap\mathcal H_\varepsilon^\star=\varnothing)
\le\delta+\zeta,
\label{eq:explicit-sample-complexity-guarantee}
\end{equation}
and $T_{\varepsilon,\zeta}\le\overline T_{\varepsilon,\zeta}$. Moreover,
\begin{equation}
\overline T_{\varepsilon,\zeta}
=O\!\left(
1+\frac{b_\delta}{\Gamma}
+\frac{\sigma^2}{\Gamma^2}\log\frac{2m}{\zeta}
+\frac1q\log\frac2\zeta\right).
\label{eq:explicit-sample-complexity-rate}
\end{equation}
\end{corollary}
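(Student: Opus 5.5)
The plan is to plug the choice $t=\overline T_{\varepsilon,\zeta}$ and $\tau=n_E$ into the infimum defining $\mathcal B_t(\varepsilon)$ in Theorem~\ref{thm:generic-approximate-recovery}, and show that each of the two terms is at most $\zeta/2$. Since $\overline T_{\varepsilon,\zeta}=n_E+n_G-1\ge n_E$, the index $\tau=n_E$ is admissible. The generation term then has exponent $t-\tau+1=n_G$. The theorem gives $\mathbb P_{\theta^\star}(\mathsf C_t\cap\mathcal H_\varepsilon^\star=\varnothing)\le\delta+R_{n_E}(\varepsilon)+(1-q)^{n_G}$, so the bounds $R_{n_E}(\varepsilon)\le\zeta/2$ and $(1-q)^{n_G}\le\zeta/2$ give Eq.~\eqref{eq:explicit-sample-complexity-guarantee}. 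The same two bounds give $\mathcal B_{\overline T}(\varepsilon)\le\zeta$, hence $T_{\varepsilon,\zeta}\le\overline T_{\varepsilon,\zeta}$ by the definition in Eq.~\eqref{app:eq:oracle-recovery-horizon}.

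\textbf{Readiness term.} Write $n=n_E\ge1$. From $n\ge 2b_\delta/\Gamma$ we get $n\Gamma-b_\delta\ge n\Gamma/2>0$, so $(n\Gamma-b_\delta)_+^2\ge n^2\Gamma^2/4$. Therefore
\[
R_n(\varepsilon)\le m\exp\!\left\{-\frac{n\Gamma^2}{8\sigma^2}\right\}.
\]
Since $n\ge(8\sigma^2/\Gamma^2)\log(2m/\zeta)$, this is at most $m\cdot\zeta/(2m)=\zeta/2$. The only subtlety is that $b_\delta$ could be negative in degenerate cases. In that case $(n\Gamma-b_\delta)_+\ge n\Gamma$ still holds, so the bound only improves.

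\textbf{Generation term.} If $q=1$, then $(1-q)^{n_G}=0$. If $q<1$, then $n_G\ge\log(2/\zeta)/(-\log(1-q))$ gives $n_G\log(1-q)\le\log(\zeta/2)$, so $(1-q)^{n_G}\le\zeta/2$.

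\textbf{Rate.} For Eq.~\eqref{eq:explicit-sample-complexity-rate}, bound the ceiling and maximum by sums. This gives $n_E\le 2+2b_\delta/\Gamma+(8\sigma^2/\Gamma^2)\log(2m/\zeta)$ (taking $b_\delta\ge0$ in this bound). For $n_G$ when $q<1$, use $-\log(1-q)\ge q$ to get $n_G\le1+q^{-1}\log(2/\zeta)$. Adding the two bounds gives the stated $O(\cdot)$.

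I expect the main obstacle to be routine bookkeeping rather than a real difficulty. The points to check are that $\tau=n_E\le t$ so the index lies in the range of the infimum, that the exponent offset matches the $+1$ in $(1-q)^{t-\tau+1}$ (this is why the horizon has the $-1$), and the positivity case of $(\cdot)_+$.
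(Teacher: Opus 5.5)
Your proposal is correct and follows essentially the same route as the paper: split the infimum in $\mathcal B_t(\varepsilon)$ at $\tau=n_E$, use $n_E\Gamma\ge 2b_\delta$ to get $R_{n_E}(\varepsilon)\le\zeta/2$ and the choice of $n_G$ to get $(1-q)^{n_G}\le\zeta/2$, then bound maxima and ceilings by sums and use $-\log(1-q)\ge q$ for the rate. Your aside on negative $b_\delta$ is harmless but never arises, since $w_{\min}\le 1$ and $\delta<1$ force $b_\delta=\log\frac{1}{\delta w_{\min}}>0$.
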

\begin{proof}
By construction, $n_E\Gamma\ge2b_\delta$, whence
\[
\frac{(n_E\Gamma-b_\delta)^2}{2n_E\sigma^2}
\ge\frac{n_E\Gamma^2}{8\sigma^2}
\ge\log\frac{2m}{\zeta}.
\]
Therefore $R_{n_E}(\varepsilon)\le\zeta/2$. Also
$(1-q)^{n_G}\le\zeta/2$, including $q=1$ because $n_G=1$.
The split at $n_E$ yields
\[
\mathcal B_{\overline T_{\varepsilon,\zeta}}(\varepsilon)
\le R_{n_E}(\varepsilon)+(1-q)^{n_G}\le\zeta.
\]
Apply the recovery theorem and the definition of the deterministic horizon.
The order bound follows by bounding a maximum by a sum, controlling the
ceilings by an additive constant, and using $-\log(1-q)\ge q$ for $q<1$.
\end{proof}

\section{Implementation Notes}
\label{app:implementation}
\subsection{Natural-language hypotheses and reusable tests}
A generated sentence must compile into a measurable claim within a
registered experimental interface. Statistical validity comes from the
statewise evidence, not from the calibration of the language model or
background assessor. For example, a post-hoc claim ``conductivity decreases
with temperature'' can be represented by $H=\{\beta<0\}$. The retained
statewise evidence then evaluates the complement through
$\Pi_t(H^c)=\sup_{\beta\ge0}\pi_t(\beta)$.

\paragraph{Composite interpretation of qualitative states.}
Let $\mathfrak P$ denote the distributions supported by the registered
interface. For registered population quantities, let
$g:\mathfrak P\to\mathcal Z\subseteq\{-,0,+\}^J$ record their signs.
The qualitative state $z$ represents the composite cell
$\mathfrak P_z:=\{P\in\mathfrak P:g(P)=z\}$, not just one numerical
representative. A compiled claim asserts $g(P)\in H$ for some
$H\subseteq\mathcal Z$. The abstract state space in this implementation
is the qualitative label space; the numerical effect sizes remain
continuous within its cells.

For experiment $s$, let $\mathcal J_s$ include the post-expansion history,
selected action, and frozen test specification, before the fresh data are
opened. Let $\mathfrak N_s$ be its null class. A sufficient condition is
conditional super-uniformity: for every supported $P$,
\[
\mathbb P_P(p_s\le x\mid\mathcal J_s)\le x,
\qquad 0\le x\le1,
\]
almost surely on $\{P\in\mathfrak N_s\}$. For fixed $0<\kappa<1$, use
\[
e_s(z)=
\begin{cases}
\kappa p_s^{\kappa-1},&\mathfrak P_z\subseteq\mathfrak N_s,\\
1,&\text{otherwise},
\end{cases}
\qquad E_t(z)=\prod_{s=1}^te_s(z).
\]
The entire composite cell must imply the frozen null. Testing only a
representative's numerical value is insufficient.

\begin{proposition}[Validity of the compiled interface]
\label{prop:compiled-interface}
Under the stated conditional null validity, $E_t(z)$ is a nonnegative
supermartingale under every $P\in\mathfrak P_z$. The resulting persistent
version space contains $g(P)$ simultaneously over time with probability
at least $1-\delta$. Adaptively generated claims containing that nonempty
version space inherit simultaneous certification validity.
\end{proposition}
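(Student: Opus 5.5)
The proof splits into three parts: the supermartingale property of $E_t(z)$ under every $P\in\mathfrak P_z$, Ville's inequality to obtain the persistent coverage event, and then Proposition~\ref{prop:anytime-valid-version-space} (equivalently Corollary~\ref{cor:simultaneous-hypothesis}) to transfer coverage to adaptively generated claims.

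\emph{Supermartingale step.} Fix $z$ and $P\in\mathfrak P_z$, and work with the filtration generated by the sets $\mathcal J_s$ together with the opened fold data. Consider one increment $e_s(z)$. There are two cases, decided by information in $\mathcal J_s$, since the frozen null class $\mathfrak N_s$ is fixed before the data are opened.
\begin{itemize}
\item If $\mathfrak P_z\not\subseteq\mathfrak N_s$, then $e_s(z)=1$ and the conditional expectation equals $1$ trivially.
\item If $\mathfrak P_z\subseteq\mathfrak N_s$, then $P\in\mathfrak N_s$. Conditional super-uniformity then says $p_s$ is stochastically larger than a uniform variable given $\mathcal J_s$. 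The map $x\mapsto\kappa x^{\kappa-1}$ is nonincreasing and nonnegative on $(0,1]$. Hence $\mathbb E_P[\kappa p_s^{\kappa-1}\mid\mathcal J_s]\le\int_0^1\kappa x^{\kappa-1}\,dx=1$.
\end{itemize}
The monotonicity claim in the second case needs one line: write the expectation as $\int_0^\infty \mathbb P(\kappa p_s^{\kappa-1}>y\mid\mathcal J_s)\,dy$ and apply super-uniformity to each level set. This covers exactly the case distinction the paper stresses, namely that the whole composite cell, and not a single representative, must lie in the null. Multiplying the increments and using the tower property shows that $E_t(z)$ is a nonnegative supermartingale with $E_0=1$ under every $P\in\mathfrak P_z$. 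The action and test choices are adaptive, but they are $\mathcal J_s$-measurable, so this does not interfere.

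\emph{Coverage step.} Take the true distribution $P$ and set $z^\star=g(P)$. Ville's inequality gives $\mathbb P_P(\exists t: E_t(z^\star)\ge1/\delta)\le\delta$. This is exactly the statewise validity statement \eqref{app:eq:statewise-validity} with $\theta^\star$ replaced by $z^\star$. Since $\pi_t$ is built from $E_t$ through Eq.~\eqref{eq:epos_theta}, we get $z^\star\in V_t$ for all $t$ with probability at least $1-\delta$. A single failure event suffices, so no union bound over $z$ is needed.

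\emph{Inheritance step.} On the coverage event, let $H\subseteq\mathcal Z$ be any claim, possibly generated adaptively, with $\varnothing\ne V_t\subseteq H$. Then $z^\star\in H$, that is, $g(P)\in H$. The argument is the same event inclusion as in Corollary~\ref{cor:simultaneous-hypothesis}: a false certification would force $\pi_t(z^\star)\le\delta$. This holds uniformly over $t$ and over claims, because the claims enter only through a deterministic set inclusion evaluated on the same event.

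\emph{Main obstacle.} I expect the hardest part to be measurability and filtration bookkeeping. In the natural-language setting, up to two tests may share a fold and their contributions are averaged. The plan is to treat the fold-level average as a single $e$-value: an average of valid conditional $e$-values is again a valid conditional $e$-value. Beyond that, I would check that each $\mathcal J_s$ contains everything used to choose the test, so that super-uniformity is applied conditionally on the correct sigma-field.
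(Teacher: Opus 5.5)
Your proposal is correct and follows the paper's proof essentially step for step. The steps are: the case split on whether the whole cell $\mathfrak P_z$ lies in the frozen null $\mathfrak N_s$; the bound $\mathbb E_P[\kappa p_s^{\kappa-1}\mid\mathcal J_s]\le\int_0^1\kappa x^{\kappa-1}\,dx=1$ from conditional super-uniformity (your layer-cake argument spells out what the paper attributes to the decreasing calibrator); the tower property for the supermartingale; and then the statewise Ville bound together with the event inclusion $V_t\subseteq H\Rightarrow g(P)\in H$ for certification. Your remark on averaging two tests within a fold goes beyond what this proposition covers, but it is correct by linearity of conditional expectation.
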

\begin{proof}
Fix $P\in\mathfrak P_z$. When the cell implies the selected null,
conditional super-uniformity and the decreasing p-to-e calibrator
\citep{vovk2021values} give
\[
\mathbb E_P[\kappa p_s^{\kappa-1}\mid\mathcal J_s]
\le\int_0^1\kappa x^{\kappa-1}\,dx=1.
\]
Otherwise, the multiplier is one. Multiplication and iterated conditional
expectation prove the supermartingale property. The statewise inequality
and version-space containment give the certification statement.
\end{proof}
Fresh folds address reuse and adaptive specification; they do not prove
conditional calibration of the selected test. If only an asymptotic test
is implemented, the empirical procedure must be labeled approximate.
Validity of the composite interface also does not establish the finite-state
likelihood-ratio information rate: opposite-sign distribution classes may
contain arbitrarily small effects.

\paragraph{Fixed background contour.}
Guaranteed possibility requires one shared background contour over the
registered qualitative states, or an equivalent representation supporting
the same minima. Independent assessments of $H$ and $H^c$ do not determine
$\inf_{z\in H}\pi^K(z)$. The construction, normalization, representation,
and exact or bounded minimization procedure must be documented separately
from evidential validity. The qualitative objective concerns registered
sign configurations; it is not automatically a worst-case guarantee over
all quantitative effect sizes inside a sign cell.

\subsection{Nonunique explanations}
For finite represented states, the implementation returns one claim by
Eq.~\eqref{eq:specificity-tiebreak}, followed by a fixed final ordering.
Semantic paraphrases share one representation and cardinality.
The recovery theorem concerns $U_t$; it does not claim global recovery of
the smallest utility maximizer. The HMS of the single returned claim must
not be replaced by an average over a different argmax set.

\subsection{Computational cost}
Let $S=|V_t|$, $M=|\mathcal H_{\le t}|$, and $L=|\mathcal A|$.
Classifying the retained archive costs $O(SM)$ membership queries.
The frontier has at most $O(S^2M)$ raw ordered pairs before deduplication,
and at most $S(S-1)$ distinct pairs. The LP has $L+1$ variables and one
inequality per distinct constrained pair. These costs assume that claim
scores and memberships are available; constructing the background contour
and computing global extrema can be additional costs.

Conditional i.i.d. rejection sampling needs $1/p_t^{\mathrm{cert}}$
attempts on average without a cap, whereas Algorithm~\ref{alg:expand}
uses at most $B_t$ attempts. A raw-proposal log, including rejected and
duplicate proposals, is required for that accounting. An experiment is
not added to the evidence record twice merely because a generated claim
is re-evaluated or paraphrased.

\section{Additional Experimental Details and Results}
\label{app:experiments}

This section provides additional experimental setups, implementation details, and supplementary results.

\subsection{Synthetic experiment} \label{app:synthetic}

\paragraph{Configuration.}
We define the background environment as
\[
W_K(A)
=
0.55\sin(2\pi A)+0.18\cos(4\pi A).
\]
The background dataset $\mathcal D_K$ consists of observations
at $100$ uniformly sampled locations $A\in[0,1]$, with
Gaussian observation noise $\sigma_K=0.08$.
We model the background knowledge using a zero-mean Gaussian
process with RBF kernel
\[
k(A,A')
=
\exp\left(
-\frac{(A-A')^2}{2\ell^2}
\right),
\]
where $\ell=0.04$. The background dataset and GP
hyperparameters are fixed across simulations.

The parameter space is
\[
\Theta=[0.19,1.81]\times[0.19,1.00],
\]
discretized with step size $0.03$, giving $1{,}540$ states.
We construct the reachable hypothesis class
$\mathcal H_\infty$ from $44$ overlapping rectangular
hypotheses
\[
H_{rs}
=
(I_r^a\times I_s^w)\cap\Theta.
\]
The amplitude intervals are
\[
I_r^a=[c_r-0.18,c_r+0.18],
\qquad
c_r\in
\{0.37,0.49,0.61,0.73,0.85,0.94,
1.09,1.21,1.33,1.45,1.57\},
\]
and the width intervals are
\[
I_s^w\in
\{
[0.19,0.40],
[0.31,0.52],
[0.43,0.64],
[0.64,0.85]
\}.
\]
The ground-truth parameter is
$\theta^\star=(1.00,0.28)$, and we define the set of
true reachable hypotheses as
\[
\mathcal H^\star
=
\{H\in\mathcal H_\infty:\theta^\star\in H\}.
\]
In this construction, $|\mathcal H^\star|=3$.

For the incomplete-initialization setting,
$\mathcal H_0$ consists of the four hypotheses nearest to a
starting point $(a_0,w_0)$ under normalized Euclidean
distance between hypothesis centers. We use starting points
from the Cartesian product
\[
a_0\in\{0.35,0.45,0.55,0.65\},
\qquad
w_0\in\{0.64,0.70,0.76,0.82\},
\]
yielding $16$ initial candidate sets, each satisfying
$\mathcal H_0\cap\mathcal H^\star=\varnothing$.

\paragraph{Evidence and Knowledge Models.}
At round $t$, selecting an action $A_t\in\mathcal A$ produces
\[
M_t
=
W_{\theta^\star}(A_t)+\varepsilon_t,
\qquad
\varepsilon_t\sim\mathcal N(0,\sigma_E^2),
\]
with $\sigma_E=0.02$ and an experimental budget of
$T=16$ rounds; for ablation, we increase this to $T=30$.
The action space $\mathcal A$ consists of $73$ evenly spaced
locations on $[0,1]$.
We select experiments using the possibility-frontier maximin
design in Eq.~\eqref{eq:cut-directed-design}, with the
Gaussian KL discrepancy
\[
d_A(\theta,\vartheta)
=
\frac{
\bigl(W_\theta(A)-W_\vartheta(A)\bigr)^2
}{
2\sigma_E^2
}.
\]
If the possibility frontier is empty, the design maximizes
the minimum separation over all distinct surviving state
pairs in $V_t$.
If $|V_t|\leq 1$, we instead use the uniform distribution
over $\mathcal A$.

Under the Gaussian observation model, the likelihood after
$t$ experiments is
\[
L_t(\theta)
=
\prod_{i=1}^t
\mathcal N\!\left(
M_i;
W_\theta(A_i),
\sigma_E^2
\right).
\]
Following the method-of-mixtures construction for sequential
likelihood ratios~\citep{howard2021time}, for each
$\theta=(a,w)$ we use fixed Gaussian mixture weights over
alternative states $\vartheta=(a',w')$,
\[
q_\theta(\vartheta)
\propto
\exp\left[
-\frac12
\left(
\frac{(a'-a)^2}{\tau_a^2}
+
\frac{(w'-w)^2}{\tau_w^2}
\right)
\right],
\]
where $\tau_a=0.10$ and $\tau_w=0.06$.
Since $\Theta$ is finite in the experiment, $q_\theta$ is
normalized over the full discretized state space and the
resulting $e$-process is
\[
E_t(\theta)
=
\sum_{\vartheta\in\Theta}
q_\theta(\vartheta)
\frac{L_t(\vartheta)}{L_t(\theta)}.
\]
The persistent evidential possibility contour is obtained using Eq. \ref{eq:epos_theta}.
We use $\delta=0.1$ and classify hypotheses using the
version space and certification criteria in Eq. \ref{eq:three-sets}.

For background knowledge, each candidate environment is
evaluated at $31$ evenly spaced locations, yielding the
vector $Z_\theta$.
The conditioned GP induces
\[
Z\mid\mathcal D_K
\sim
\mathcal N(\mu_K,\Sigma_K).
\]
Motivated by Gaussian possibility constructions and
likelihood normalization
~\citep{ristic2020target,denoeux2014likelihood}, we define
\[
s_K(\theta)
=
\exp\left[
-\frac12
(Z_\theta-\mu_K)^\top
\widetilde\Sigma_K^{-1}
(Z_\theta-\mu_K)
\right],
\]
where
\[
\widetilde\Sigma_K
=
\kappa(\Sigma_K+10^{-6}I),
\qquad
\kappa=300.
\]
The fixed background possibility contour is obtained by
supremum normalization,
\[
\pi^K(\theta)
=
\frac{s_K(\theta)}
{\max_{\vartheta\in\Theta}s_K(\vartheta)}.
\]
Each nonempty hypothesis is scored by the background
compatibility utility, which is a minimum on the finite discretized state space.
Among certified hypotheses, we select a maximizer of
$u(H)$.

\paragraph{Hypothesis Expansion.}
Starting from $\mathcal H_0$, we expand the hypothesis set by proposing up to $B_t=4$ previously unexplored hypotheses per round, ordered by decreasing posterior mass. This prioritizes hypotheses that are most plausible given both the accumulated experimental evidence and the background knowledge model. Expansion stops early once a proposed hypothesis is certified; otherwise, proposals continue until the round budget is exhausted.

\paragraph{Baseline Implementation.}
MLE selects the hypothesis with the highest profile
likelihood, while MAP selects the hypothesis with the
largest posterior mass. Both methods remain restricted to
the initial candidate set $\mathcal H_0$.

For MAP and AutoDiscovery, we construct a discrete Bayesian
prior from the same background knowledge score,
\[
p_K(\theta)
=
\frac{s_K(\theta)}
{\sum_{\vartheta\in\Theta}s_K(\vartheta)},
\]
and update it after observing $\mathcal D_t$ according to
\[
p_t(\theta\mid\mathcal D_t)
=
\frac{p_K(\theta)L_t(\theta)}
{\sum_{\vartheta\in\Theta}
p_K(\vartheta)L_t(\vartheta)}.
\]
The corresponding prior and posterior mass of a hypothesis
$H$ are
\[
q_K(H)
=
\sum_{\theta\in H}p_K(\theta),
\qquad
q_t(H)
=
\sum_{\theta\in H}
p_t(\theta\mid\mathcal D_t).
\]

AutoDiscovery scores hypotheses using Bayesian surprise,
\[
S_t(H)
=
D_{\mathrm{KL}}\!\left(
\operatorname{Bern}(q_t(H))
\,\|\,
\operatorname{Bern}(q_K(H))
\right).
\]
To expand the hypothesis space, AutoDiscovery maintains a
persistent Monte Carlo tree search (MCTS) across rounds.
Tree traversal uses an upper-confidence-tree criterion based
on Bayesian surprise, with progressive widening controlling
the number of explored children. New children are sampled
from previously unexplored hypotheses, with the proposal
distribution favoring hypotheses near the current branch in
normalized parameter space. At each round, MCTS performs
four such expansions, after which AutoDiscovery returns the
discovered hypothesis with the largest Bayesian surprise.

MLE, MAP, and AutoDiscovery are evaluated on the same
uniformly sampled experimental observations, while our
method selects experiments adaptively using the
possibility-frontier design.

\paragraph{Evaluation.}
Hypothesis identification is evaluated by precision with
respect to the true reachable hypotheses
$\mathcal H^\star$.
For all methods, precision is determined by whether the
selected hypothesis $\hat H_t$ belongs to $\mathcal H^\star$.
Rounds in which a method returns no hypothesis are excluded
from the precision average.

Abductive quality is evaluated using the utility
$u(H)$ defined in Eq.~\eqref{eq:guaranteed-background}. For our method,
we report $U_t=\sup_{H\in\mathcal C_t}u(H)$
with $U_t=0$ when $\mathcal C_t=\emptyset$.
For MLE, MAP, and AutoDiscovery, the utility of the selected
hypothesis is reported only when that hypothesis satisfies
the same evidence-certification criterion; otherwise it is
set to zero.

Results are averaged over eight random seeds and 16 initial
candidate sets. Uncertainty is computed from seed-level
averages.

\paragraph{Additional Results}

We additionally report the certification rate
$\Pr(\mathcal C_t\neq\varnothing)$ and false-certification
rate
$\Pr(\mathcal C_t\not\subseteq\mathcal H^\star)$
in Fig.~\ref{fig:exp2_selection} for both
incomplete and complete initialization settings.
Our method abstains when no hypothesis is certified and
achieves zero empirical false certifications across the
evaluated runs, while the certification rate increases as
evidence accumulates.

\begin{figure}[h]
    \centering
    \begin{subfigure}[t]{0.48\linewidth}
        \centering
        \includegraphics[width=\linewidth]{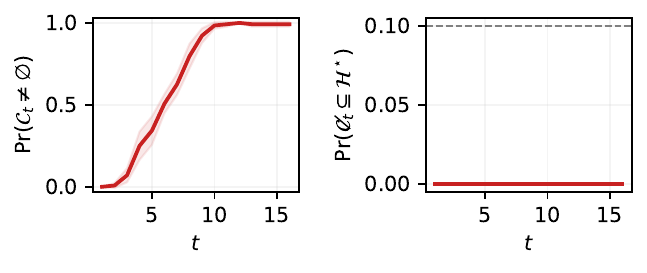}
        \caption{Initial initial hypothesis set.}
        \label{fig:left}
    \end{subfigure}
    \hfill
    \begin{subfigure}[t]{0.48\linewidth}
        \centering
        \includegraphics[width=\linewidth]{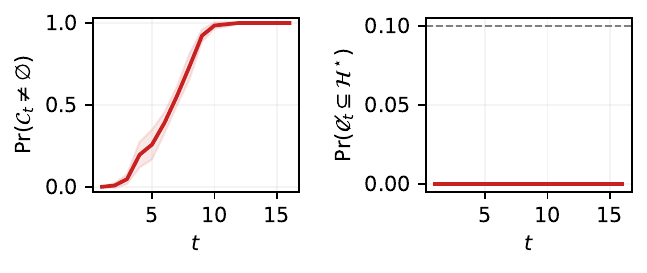}
        \caption{Complete initial hypothesis set.}
        \label{fig:right}
    \end{subfigure}
    \caption{As evidence accumulates, hypotheses are certified only when sufficiently supported, with the false-certification rate remaining below the target level $\delta=0.1$. Shaded regions show 95\%
confidence intervals}
    \label{fig:exp2_selection}
\end{figure}


\subsection{Real-World Dataset Experiment}
\label{app:natural_language_discovery}


\paragraph{Initial hypotheses.}
Each run begins from an initial candidate set
$\mathcal H_0=\{H_0,H_0^c\}$ consisting of one
predefined natural-language hypothesis and its logical
complement. Neither hypothesis reveals the benchmark gold
hypothesis. For each dataset, we evaluate five different
initial hypothesis pairs that vary the relationship,
population, or explanatory variable under consideration.
All methods use the same initial pair for a given run.

\paragraph{Experimental state space and interface.}
The state $\theta\in\Theta$ specifies a
possible model of the world through the observation laws
$\{P_\theta(\cdot\mid A):A\in\mathcal A\}$, while an action
$A\in\mathcal A$ specifies a measurement that can distinguish
between such states.

For DiscoveryBench, we represent the experimentally
accessible part of $\theta$ by the population quantities
measurable through the interface. Let
$j=1,\ldots,J$ index these quantities, or \emph{estimands},
and write
\[
\theta=(\theta_1,\ldots,\theta_J),
\]
where $\theta_j$ denotes the value of estimand $j$ under
state $\theta$. For example, an estimand may be a difference
in population means or a population regression coefficient.
The corresponding unknown true state is
\[
\theta^\star
=
(\theta_1^\star,\ldots,\theta_J^\star).
\]

Each action $A$ measures one estimand; let $j_A$ denote its
index. Under state $\theta$, the population quantity targeted
by $A$ is therefore $\theta_{j_A}$. Different actions may
target the same estimand, for example through fresh-fold
replications or different supported statistical procedures.
States that agree on all estimands accessible through
$\mathcal A$ are observationally indistinguishable under
this interface and need not be represented separately.

For possibility-frontier computation, we use a finite
working representation in which each estimand takes one of
three qualitative states: negative, zero, or positive. We
assign these states the numerical representatives
\[
\theta_j
\in
\{-s_j^{\mathrm{ref}},0,s_j^{\mathrm{ref}}\},
\]
where $s_j^{\mathrm{ref}}$ is a prespecified scale for
estimand $j$, chosen as a typical standard error for that
estimand family. The nonzero values thus provide
standardized representatives of negative and positive
departures from zero.

The resulting working state space is
\[
\Theta_{\mathrm{grid}}
=
\prod_{j=1}^{J}
\{-s_j^{\mathrm{ref}},0,s_j^{\mathrm{ref}}\}.
\]
This discretization is used only for frontier computation
and does not assume that the underlying population
quantities are discrete.


Each generated hypothesis is mapped to a sign condition on its corresponding estimand. For example, consider the hypothesis that completing a bachelor's degree is positively associated with later socioeconomic status. Let \(\theta_j\) denote the corresponding population regression coefficient. The hypothesis is represented as
\[
H=\{\theta:\theta_j>0\},
\qquad
H^c=\{\theta:\theta_j\leq0\}.
\]
An associated action estimates this coefficient on a fresh data fold using the prespecified regression procedure and tests the corresponding directional null. In the working grid, \(H\) contains states with \(\theta_j=s_j^{\mathrm{ref}}\), whereas \(H^c\) contains those with \(\theta_j\in\{-s_j^{\mathrm{ref}},0\}\). 
The remaining coordinates encode the signs of the other estimands registered in the fixed experimental interface \(\mathcal A\).

\paragraph{Statistical procedures.}
We adapt POPPER~\citep{huang2025popper} to instantiate candidate actions for generated hypotheses and their complements. POPPER constructs each action as a falsification test whose null hypothesis is implied by the hypothesis being tested, so that rejection provides evidence against that hypothesis.

At each round, POPPER proposes a finite set of candidate actions from the predefined action space $\mathcal A$ for the hypotheses generated so far and their complements. Each action specifies its estimand, population or subgroup, variables and controls, null and alternative hypotheses, and statistical test before its assigned data fold is observed. We retain only proposals whose full specification is supported by the prespecified experimental interface; unsupported or duplicate proposals are discarded before any new data fold is observed. For example, in NLS SES, supported actions test directional associations between bachelor's degree completion and socioeconomic status, family size, academic ability, or class rank. We use one-sided OLS coefficient tests with HC3 robust standard errors, including predefined subgroup analyses.

\paragraph{Data splitting and experimental budget.}
For each dataset, we randomly partition the observations
into 16 disjoint folds using the run's random seed.
At round $t$, we select up to two distinct actions from
$\mathcal A_t$. Their complete statistical specifications
are fixed before the corresponding fold is observed, and
the selected actions are then executed on that same
previously unseen fold. Thus, each run uses at most 16
data folds and permits at most 32 test executions.

We retain the specifications and results of all executed
actions in a shared evidence record within each run.
Previously executed actions cannot be introduced again
merely by changing their wording. A registered action may,
however, be replicated on a later fresh fold without
modifying its estimand, population, null and alternative
hypotheses, or statistical procedure. We permit at most
two such replications of each registered action.

\paragraph{Evidence construction.}
For each selected action $A$, POPPER executes its
prespecified statistical test on the round's previously
unseen data fold and returns the $p$-value for its fixed
null hypothesis. Following POPPER~\citep{huang2025popper},
we convert each $p$-value to an $e$-value using
\[
e(p)
=
\kappa p^{\kappa-1},
\qquad
\kappa=0.5.
\]
The resulting evidence is stored together with the action's
estimand $j_A$, null hypothesis, and statistical
specification.

Recall that action $A$ measures estimand $j_A$, whose value
under state $\theta$ is $\theta_{j_A}$. We therefore map the
null hypothesis of each action to the states in
$\Theta_{\mathrm{grid}}$ for which the corresponding
condition on $\theta_{j_A}$ holds. Let $A_{t,k}$ denote the
$k$th action executed on fold $t$, let $e_{t,k}$ denote its
$e$-value, and let $\Theta_{0,t,k}$ denote the set of states
satisfying its prespecified null hypothesis. Its statewise
contribution is
\[
e_{t,k}(\theta)
=
e_{t,k}\,\mathbf{1}\{\theta\in\Theta_{0,t,k}\}
+
\mathbf{1}\{\theta\notin\Theta_{0,t,k}\}.
\]

At each round, up to two distinct actions may be executed on
the same data fold. These actions may target different
estimands, but because their test statistics are computed
from the same observations they need not be independent.
We therefore average their statewise contributions within
the fold,
\[
\bar e_t(\theta)
=
\frac{1}{K_t}
\sum_{k=1}^{K_t} e_{t,k}(\theta),
\qquad
K_t\leq 2.
\]
All selected actions and their statistical specifications
are fixed before fold $t$ is observed. Evidence is then
accumulated across successive disjoint folds as
\[
E_t(\theta)
=
\prod_{s=1}^t \bar e_s(\theta).
\]


\paragraph{Possibility frontier.}
Using the current version space $V_t$, we construct the
frontier pair set $\mathcal P_t$ as in
Sec.~\ref{sec:pos_front}. The pairs in $\mathcal P_t$
represent distinctions between surviving states that
separate open hypotheses in $\mathcal O_t^+$ and could
therefore improve on the current certified utility $U_t$.
If $\mathcal O_t^+=\varnothing$, we instead consider all
distinct pairs of states in $V_t$.

\paragraph{Experiment selection.}
At each round, POPPER instantiates candidate actions from
the supported experimental interface $\mathcal A$ for hypotheses generated so far and their complements,
including eligible replications of previously registered
actions. We apply the maximin design of
Eq.~\eqref{eq:cut-directed-design} to the resulting
validated actions.

Recall that each action
$A$ measures estimand $j_A$.
For experimental design only, we use the working observation
model
\[
\widehat{\theta}_A\mid\theta
\approx
\mathcal N\!\left(
\theta_{j_A},\sigma_A^2
\right),
\]
where $\sigma_A$ is the anticipated standard error of
action $A$. If estimand $j_A$ has been measured on an
earlier fold, we use its previously observed standard
error; otherwise, we use a prespecified reference standard error
for the corresponding estimand family, such as $0.08$ for
binary prevalence estimands. Thus, $\sigma_A$ is always
determined before the current fold is observed.

For two candidate states $\theta$ and $\vartheta$, we
measure the ability of action $A$ to distinguish them by
the KL divergence between their Gaussian working
observation models,
\[
d_A(\theta,\vartheta)
=
\frac{
\left(
\theta_{j_A}-\vartheta_{j_A}
\right)^2
}{
2\sigma_A^2
}.
\]
Hence, an action is more informative when the two states
make more widely separated predictions for the estimand it
measures, relative to its anticipated estimation noise.
If $\theta_{j_A}=\vartheta_{j_A}$, then
$d_A(\theta,\vartheta)=0$.

Restricting the maximin design in the main text to the
currently available actions, we compute
\[
\rho_{t+1}
\in
\arg\max_{\rho\in\Delta(\mathcal A_t)}
\min_{(\theta,\vartheta)\in\mathcal P_t}
\sum_{A\in\mathcal A_t}
\rho(A)\,
d_A(\theta,\vartheta).
\]
This allocation maximizes the expected separation of the
least distinguishable frontier pair. We then select up to
two distinct actions according to $\rho_{t+1}$, fix their
statistical specifications, and execute them on the next
previously unseen data fold.


\paragraph{Adaptive hypothesis generation.}
At each expansion step, an LLM proposes a new falsifiable
hypothesis and its logical complement using the research
question, previously generated hypotheses, and all completed
experiments. Proposals are restricted to claims supported by
the experimental interface and are validated before any new
data fold is observed.

For each accepted hypothesis, previously collected evidence
whose prespecified statistical claim applies to that
hypothesis is reused, allowing newly generated hypotheses to
be evaluated against the full accumulated evidence record.
Falsified hypotheses are discarded, while nonfalsified
hypotheses are retained for further experimentation.
Generated hypotheses that become certified enter
$\mathcal C_t$ and are eligible for selection according to
the abductive utility $u(H)$.

\paragraph{Background knowledge and abductive selection.}
We use an LLM to elicit background possibility values~\citep{yang2026verbalizing}
for the negative, zero, and positive states of each estimand,
using background scientific knowledge without exposing the
current run's experimental results. For each estimand, the
three values are normalized by their maximum. For a joint
state $\theta=(\theta_1,\ldots,\theta_J)$, we then define
\[
\pi^K(\theta)
=
\min_{j=1,\ldots,J}\pi_j^K(\theta_j),
\]
yielding a fixed background possibility contour over the
finite working state space.

For each hypothesis $H$, we compute its background utility
using Eq.~\ref{eq:guaranteed-background}. Among the certified
hypotheses $H\in\mathcal C_t$, we select those maximizing $u(H)$,
breaking utility ties in favor of smaller cardinality as in
Eq.~\ref{eq:specificity-tiebreak}.

\paragraph{Hypothesis Matching Score (HMS).}
We evaluate agreement with the DiscoveryBench gold
hypothesis using the Hypothesis Matching Score
(HMS)~\citep{majumder2025discoverybench}, using the
version adopted in AstaBench~\citep{bragg2026astabench}.
HMS measures the alignment between a predicted and gold
hypothesis along three dimensions: their context,
associated variables, and the relationship among those
variables. Each dimension is scored by an LLM judge, and
the final HMS is the product of the three alignment
scores:
\[
\mathrm{HMS}(\hat H,H^\star)
=
s_{\mathrm{ctx}}\,
s_{\mathrm{var}}\,
s_{\mathrm{rel}}.
\]
The score is on a $[0,1]$ scale, with higher
values indicating closer agreement with the gold
hypothesis. The gold hypothesis is used only for this
post-hoc evaluation and is not exposed during hypothesis
generation, experiment selection, or hypothesis
selection.

\subsection{Validity under Adaptive Hypothesis Selection}
\label{app:validity}

This experiment evaluates the evidential criteria defining the certified set $\mathcal{C}_t$ in Eq.~\eqref{eq:evidence-admissibility}, which requires $\underline{\Pi}_t^E(H)>\delta$ and $\underline{N}_t^E(H)\geq 1-\delta$. Specifically, it examines anytime validity and statistical power under post-hoc hypothesis selection, as well as the role of necessity in distinguishing compatibility from sufficient evidential support.

\paragraph{Setup.}
Sequential observations are generated as
$M_t\sim\mathcal N(\theta^\star,1)$ for $t=1,\ldots,100$,
with the fixed candidate family
\[
\mathcal H
=
\left\{
\{\theta>c\},\{\theta<c\}:
c\in\{-1,-0.9,\ldots,1\}
\right\}.
\]
At each round, all $42$ candidates are evaluated using
the same accumulated observations, allowing post-hoc
hypothesis selection. The fixed family facilitates comparison
with Bonferroni correction, although the simultaneous validity
guarantee of our method also accommodates adaptively generated
hypotheses. The certification threshold is set to $\delta=0.05$.

The Gaussian normal-mixture $e$-process
\citep{howard2021time} is given by
\[
E_t(\theta)
=
\frac{1}{\sqrt{1+t\tau^2}}
\exp\left\{
\frac{\tau^2
\left(\sum_{s=1}^t(M_s-\theta)\right)^2}
{2(1+t\tau^2)}
\right\},
\]
which mixes likelihood ratios over alternatives
$\theta+\xi$, where $\xi\sim\mathcal N(0,\tau^2)$.
The resulting $e$-possibility and necessity measures
are computed according to Eq.~\eqref{eq:epos_theta},
with their persistent versions used for certification.

\paragraph{Baselines.}
Two baselines are considered, both employing
hypothesis-specific one-sided Gaussian mixture
$e$-processes. Let
$S_t(c)=\sum_{s=1}^t(M_s-c)$. The corresponding
processes are
\[
E_t^\pm(c)
=
\frac{2}{\sqrt{1+t\tau^2}}
\exp\left\{
\frac{\tau^2 S_t(c)^2}{2(1+t\tau^2)}
\right\}
\Phi\left(
\frac{\pm\tau S_t(c)}{\sqrt{1+t\tau^2}}
\right),
\]
where $E_t^+(c)$ tests $\theta\le c$ and
$E_t^-(c)$ tests $\theta\ge c$.

Both baselines select the candidate with the largest
corresponding $e$-value. \emph{Naive reuse} retains
the selected candidate when its $e$-value exceeds
$1/\delta$, without accounting for post-hoc selection.
\emph{Bonferroni}~\citep{dunn1961multiple} instead
applies the threshold
$|\mathcal H|/\delta=42/\delta$, ensuring
family-wise error control over the fixed candidate
family. No additional temporal correction is needed
because the individual $e$-processes are anytime-valid.

\paragraph{Metrics.}
Following Corollary~\ref{cor:simultaneous-hypothesis},
two family-wise error probabilities are evaluated:
false rejection of a true hypothesis and false
necessity assignment to a false hypothesis,
respectively:
\begin{align*}
&\Pr_{\theta^\star}\left(
\exists t\le T,H\in\mathcal H:
\theta^\star\in H,\,
\underline{\Pi}_t^E(H)\le\delta
\right),\\
&\Pr_{\theta^\star}\left(
\exists t\le T,H\in\mathcal H:
\theta^\star\notin H,\,
\underline{N}_t^E(H)\ge1-\delta
\right).
\end{align*}
For the baselines, the corresponding family-wise
probability of supporting any false candidate is
reported.

Statistical power is measured for the directional
target $H^\star=\{\theta>0\}$ when $\theta^\star>0$
and $H^\star=\{\theta<0\}$ when $\theta^\star<0$,
as the probability of retaining the target by
$T=100$. Power is undefined at $\theta^\star=0$.

To isolate the contribution of necessity, the
following events are compared at $T=100$:
\[
X_1=\{\underline{\Pi}_T^E(H)>\delta\},
\qquad
X_2=\{\underline{\Pi}_T^E(H)>\delta,\,
\underline{N}_T^E(H)\ge1-\delta\}.
\]
Their probabilities are plotted against the
oriented boundary distance
$d=c-\theta^\star$ for $H=\{\theta>c\}$
and $d=\theta^\star-c$ for $H=\{\theta<c\}$,
where $d<0$ indicates that the hypothesis
contains the true parameter.

\begin{figure*}[t]
    \centering
    \begin{subfigure}[b]{0.32\textwidth}
        \centering
        \includegraphics[width=\textwidth]{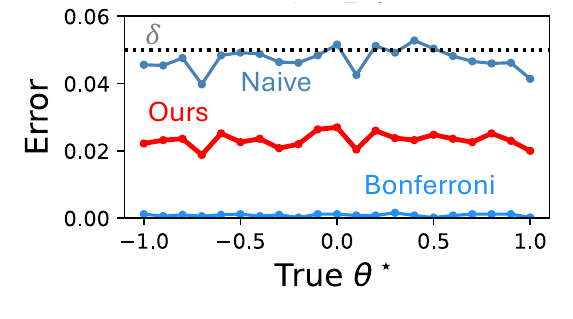}
        \caption{Error}
        \label{fig:exp1_error}
    \end{subfigure}
    \hfill
    \begin{subfigure}[b]{0.32\textwidth}
        \centering
        \includegraphics[width=\textwidth]{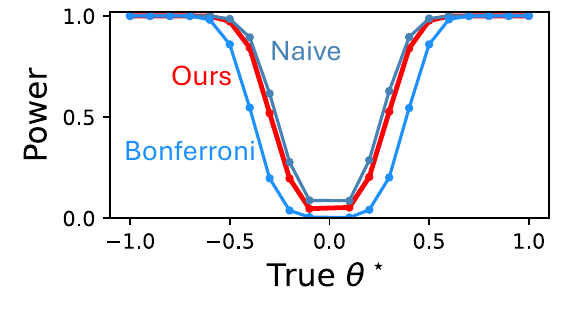}
        \caption{Power}
        \label{fig:exp1_power}
    \end{subfigure}
    \hfill
    \begin{subfigure}[b]{0.32\textwidth}
        \centering
        \includegraphics[width=\textwidth]{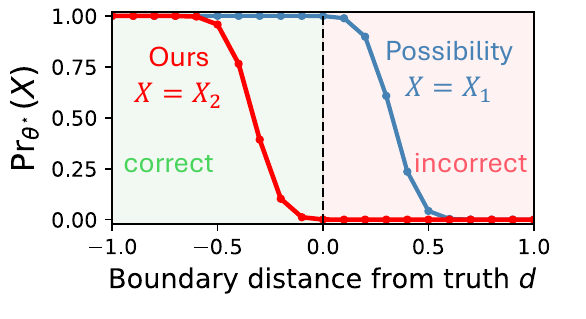}
        \caption{Necessity ablation}
        \label{fig:exp1_ablation}
    \end{subfigure}
    \caption{
    Validity under post-hoc hypothesis selection.
    The certification criteria control both error
    types while retaining competitive power.
    The necessity requirement distinguishes
    compatibility from sufficient evidential support.
    }
    \label{fig:exp1_rst}
\end{figure*}

\paragraph{Results.}
As shown in Fig.~\ref{fig:exp1_error}, the certification criteria keep both error types below the nominal level $\delta=0.05$. Naive reuse slightly exceeds the nominal
level for some states, whereas Bonferroni is
substantially more conservative. The certification
criteria achieve power comparable to naive reuse (Fig. \ref{fig:exp1_power})
while outperforming Bonferroni for moderate
effect sizes (Fig.~\ref{fig:exp1_power}).

The necessity ablation (Fig.~\ref{fig:exp1_ablation})
shows that compatibility can persist near the true
boundary, whereas necessity requires sufficient
evidence against the complement. Smaller values
of $\delta$ impose a stricter necessity requirement,
shifting the transition toward more negative
boundary distances. The primary experiment uses $\tau=1.0$.
Fig. \ref{fig:tau_ablation} shows ablation with
$\tau\in\{0.5,2.0\}$ examine sensitivity
to the Gaussian mixture scale.

\begin{figure}[h]
    \centering
    \includegraphics[width=0.8\linewidth]{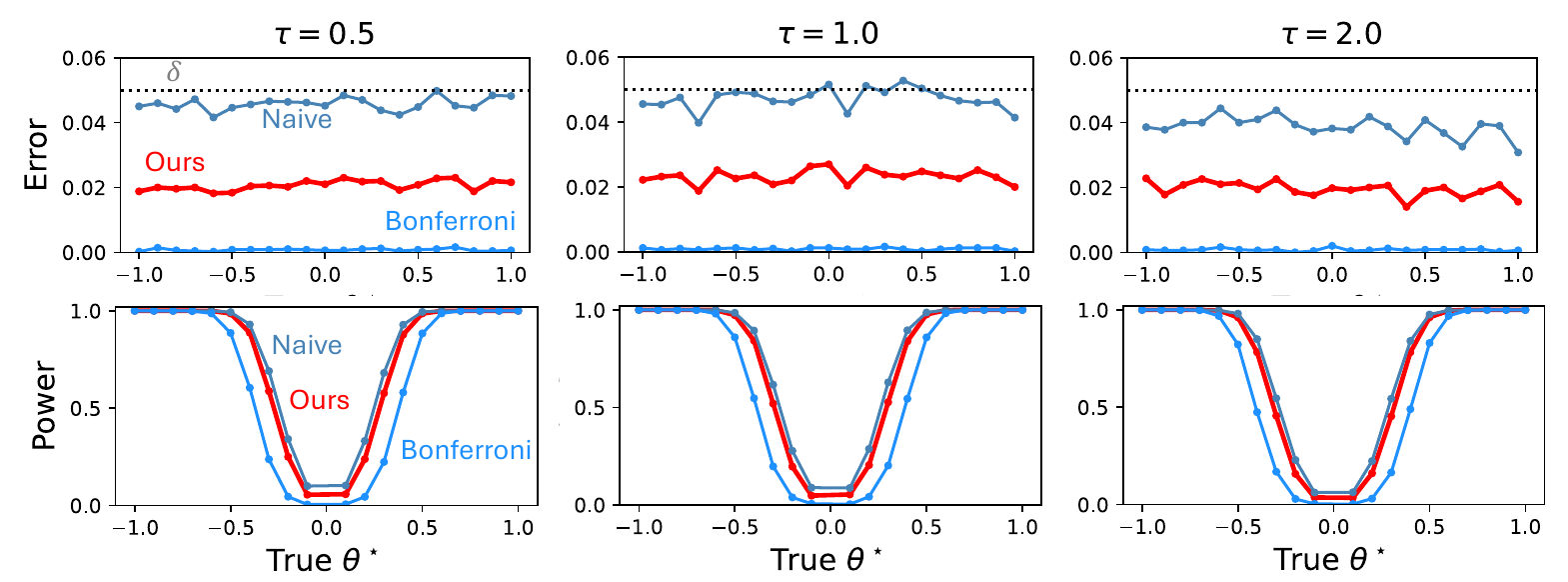}
    \caption{Empirical error (top) and power (bottom) under post-hoc hypothesis selection for different mixture scales $\tau$. Our method maintains error control at $\delta=0.05$ with power comparable to naive reuse across varying $\tau$.}
    \label{fig:tau_ablation}
\end{figure}



\end{document}